\documentclass{article}
\usepackage[final]{colm2026_conference}
\usepackage{microtype}
\usepackage{hyperref}
\usepackage{url}
\usepackage{booktabs}
\usepackage{lineno}
\definecolor{darkblue}{rgb}{0, 0, 0.5}
\hypersetup{colorlinks=true, citecolor=darkblue, linkcolor=darkblue, urlcolor=darkblue}
\usepackage{graphicx}
\usepackage{subcaption}
\usepackage{amsmath}
\usepackage{amssymb}
\usepackage{mathtools}
\usepackage{amsthm}
\usepackage{multirow}
\usepackage{makecell}
\usepackage{tcolorbox}
\usepackage{algorithm}
\usepackage{algorithmic}
\usepackage{paralist}
\usepackage{bm}
\usepackage[capitalize,noabbrev]{cleveref}
\usepackage[textsize=tiny]{todonotes}

\theoremstyle{plain}
\newtheorem{theorem}{Theorem}[section]

\newtheorem{corollary}[theorem]{Corollary}
\theoremstyle{definition}
\newtheorem{definition}[theorem]{Definition}
\newtheorem{assumption}[theorem]{Assumption}
\theoremstyle{remark}
\newtheorem{remark}[theorem]{Remark}

\newcommand{\reference}{\ensuremath{\text{ref}}}

\title{In-Context Learning as Implicit Policy Gradient}

\author{Masahiro Kaneko \quad Timothy Baldwin \\
        MBZUAI \\
        {\tt \{masahiro.kaneko,timothy.baldwin\}@mbzuai.ac.ae}
}

\begin{document}
\ifcolmsubmission
\linenumbers
\fi
\maketitle

\begin{abstract}
Recent work has shown that large language models (LLMs) can iteratively improve their outputs by incorporating generated samples and their corresponding evaluation scores as in-context examples.
Despite these empirical findings, the theoretical foundations underlying this phenomenon remain poorly understood.
In this paper, we show that score-conditioned In-Context Learning (ICL) admits a structural correspondence to policy gradient optimization.
We first provide a constructive proof that self-attention mechanisms can implement reward-weighted aggregation analogous to the REINFORCE algorithm under specific weight matrix configurations, and discuss the relationship between this construction and the behavior of pretrained transformers.
The correspondence is directional in hidden-state space and holds exactly only under the stated simplifying conditions; we quantify its strength empirically.
Within our simplified hidden-state model, we furthermore derive an exact upper bound on the distribution shift induced by a bounded attention update, yielding a trust-region-like analogy to KL-constrained policy optimization.
We validate our theory through extensive experiments across multiple LLMs, demonstrating that LLMs effectively utilize score information to shift output distributions toward high-scoring exemplars, and that attention weights exhibit a strong correlation with example scores.
\end{abstract}

\section{Introduction}

Large language models (LLMs) have demonstrated a remarkable ability to adapt to new tasks through In-Context Learning (ICL), where the model learns from examples provided in the prompt without any parameter updates \citep{brown2020language}.
This emergent capability has attracted significant interest from both theoretical and practical perspectives.
On the theoretical side, it has been shown that the forward pass of transformers can implicitly implement gradient descent \citep{vonoswald2023transformers} and ridge regression \citep{akyurek2023learning}.
Additionally, perspectives framing ICL as Bayesian inference over latent concepts acquired during pretraining \citep{xie2022explanation} and mechanistic analyses of the role of induction heads \citep{olsson2022induction} have been proposed.
However, these analyses share a common limitation: they focus exclusively on ICL in supervised learning settings with diverse input--output pairs $(x, y)$.

On the empirical side, it has been demonstrated that LLMs can improve their outputs by conditioning on self-generated samples along with their evaluation scores as in-context examples.
This method conditions the model on multiple output-score pairs $(y, r)$ generated for a fixed input $x$ rather than learning from input-output pairs $(x, y)$.
For example, the ``Optimization by PROmpting'' (OPRO) approach of \citet{yang2023large} provides the model with previously generated outputs and their scores as context, achieving strong results on prompt optimization and combinatorial problems.
Similarly, the self-rewarding mechanism of \citet{yuan2024self} uses self-generated responses with self-assigned scores as in-context examples, enabling continuous improvement of alignment without human annotation.

These successes raise a fundamental theoretical question: what learning algorithm, if any, does score-conditioned generation implement?
The existing theoretical frameworks for ICL cannot answer this question, as they analyze supervised learning with diverse $(x, y)$ pairs rather than reward-based learning with $(y, r)$ pairs for a given input.
This gap leaves us without a principled understanding of when and why score-conditioned ICL works.
Just as $(x, y)$ pairs with direct target supervision naturally connect to supervised learning, $(y, r)$ pairs with scalar reward signals suggest a connection to reinforcement learning, where agents improve their behavior based on such signals.

In this paper, we formalize this intuition by showing that score-conditioned ICL admits a structural correspondence to policy gradient optimization.
Our central claims are threefold.
\textbf{First, attention mechanisms can implement reward-weighted aggregation analogous to REINFORCE}.
When LLMs process score-annotated examples $(y, r)$ through their attention mechanisms, under specific weight matrix configurations, attention weights the examples by their scores rather than by input similarity.
Following the approach of \citet{vonoswald2023transformers}, we provide a constructive proof that this reward-weighted aggregation is structurally analogous to the REINFORCE policy gradient estimator, in the sense that both produce score-weighted updates toward high-reward outputs in the hidden state space.
We emphasize that this is an existence result: it demonstrates that the transformer architecture is \emph{capable} of implementing such a mechanism, and we provide complementary empirical evidence in Appendix~\ref{app:weight_analysis} that pretrained models exhibit functionally similar behavior.

\textbf{Second, the bounded hidden-state update induces a bounded KL shift}.
In explicit policy optimization, KL regularization is widely employed to prevent drastic distributional changes \citep{schulman2017proximal,schulman2015trust}.
In the simplified output-layer model used in our analysis, we prove that the attention-induced hidden-state update yields an exact pathwise upper bound on the distribution shift from the reference model.
This is a trust-region-like property, rather than a claim that ICL explicitly optimizes a KL-penalized objective.

\textbf{Third, we validate our theoretical predictions through extensive experiments across multiple LLMs}.
Through experiments on Llama-3, Olmo-3, Qwen-3, GPT-4o, Gemini 2.5, and Claude Sonnet 4~\citep{anthropic2025claude4,comanici2025gemini,hurst2024gpt,dubey2024llama,olmo2025olmo,qwen3technicalreport}, we confirm: (1) example scores are positively rank-correlated with probability changes; (2) attention weights correlate with scores, providing mechanistic evidence for score-weighted aggregation; and (3) ICL and explicit REINFORCE fine-tuning shift the output distribution in similar directions given identical samples.

\section{Preliminaries}

\subsection{Notation}

We consider the setting where an LLM generates outputs for a given input and receives scalar feedback.
Let $x$ denote the input prompt, $\bm{y} \in \mathbb{R}^D$ denote the embedding representation of a generated output, and $r \in \mathbb{R}$ denote its associated score (e.g., task performance, human preference rating, or model-based evaluation).
We represent a score-annotated example as a combined token $\bm{e} = (\bm{y}, r) \in \mathbb{R}^{D+1}$, concatenating the output embedding with its scalar score.
A context of $N$ such examples generated for a fixed input $x$ is denoted $C = \{(\bm{y}_i, r_i)\}_{i=1}^{N}$.
We use $\pi_{\reference}$ to denote the reference (base) LLM distribution and $\pi_\theta$ for a policy with parameters $\theta$.
The distribution induced by conditioning on context $C$ is written as $\pi(\cdot|x, C)$.

\subsection{Self-Attention}

Self-attention is the core mechanism by which transformers aggregate information across context.
It computes a weighted combination of context elements, where the weights determine how much each element contributes to the output.

\begin{definition}[Linear Self-Attention]
Given embeddings $\bm{E} \in \mathbb{R}^{d \times n}$ and weight matrices $\bm{W}^Q, \bm{W}^K, \bm{W}^V, \bm{W}^P$, linear self-attention computes:
\begin{equation}
    f_{\text{lin}}(\bm{E}) = \bm{E} + \bm{W}^P \bm{W}^V \bm{E} (\bm{W}^K \bm{E})^\top \bm{W}^Q \bm{E}
\end{equation}
\end{definition}

Linear attention directly uses key--query products as aggregation weights.
This simplified form enables clean theoretical analysis and has been used in prior work connecting ICL to gradient descent~\citep{vonoswald2023transformers}.

\begin{definition}[Softmax Self-Attention]
Softmax self-attention applies a normalized exponential:
\begin{equation}
    f_{\text{soft}}(\bm{E}) = \bm{E} + \bm{W}^P \bm{W}^V \bm{E} \cdot \text{softmax}\left(\frac{(\bm{W}^K \bm{E})^\top \bm{W}^Q \bm{E}}{\sqrt{d_k}}\right)
\end{equation}
\end{definition}

Softmax attention, used in practice, normalizes weights to form a probability distribution.
As we show in Theorem~\ref{thm:softmax}, this corresponds to Boltzmann-weighted aggregation, which assigns exponentially higher weight to high-scoring examples.

\subsection{Policy Gradient Methods}

Policy gradient methods optimize a policy $\pi_\theta$ to maximize expected reward by directly estimating gradients of the objective.
The REINFORCE algorithm~\citep{williams1992simple} expresses this as:
\begin{equation}
    \nabla_\theta J(\theta) = \mathbb{E}_{\pi_\theta}\left[r(y) \nabla_\theta \log \pi_\theta(y|x)\right]
\end{equation}
In practice, REINFORCE typically employs a baseline $b$ (e.g., mean reward) to reduce variance:
\begin{equation}
    \nabla_\theta J(\theta) = \mathbb{E}_{\pi_\theta}\left[(r(y) - b) \nabla_\theta \log \pi_\theta(y|x)\right]
\end{equation}

Given samples $\{(y_i, r_i)\}_{i=1}^{N}$, the policy gradient update with baseline becomes:
\begin{equation}
\label{eq:policy_gradient}
    \theta \leftarrow \theta + \frac{\eta}{N} \sum_{i=1}^{N} (r_i - b) \nabla_\theta \log \pi_\theta(y_i|x)
\end{equation}

KL-regularized variants address instability by penalizing deviation from a reference distribution:
\begin{equation}
\label{eq:kl_regularized}
    \max_\theta \mathbb{E}_{\pi_\theta}[r(y)] - \beta D_{KL}(\pi_\theta \| \pi_{\reference})
\end{equation}
Methods such as PPO~\citep{schulman2017proximal} and TRPO~\citep{schulman2015trust} implement variants of this principle.
As we will show, the bounded attention update in our simplified model yields a related trust-region-like effect.

\section{Theoretical Framework}

\subsection{Problem Setting}

We analyze \textbf{score-conditioned ICL}: given a fixed input $x$ and a context $C = \{(\bm{y}_i, r_i)\}_{i=1}^{N}$ containing previously generated outputs for $x$ paired with their evaluation scores, the LLM generates a new output by conditioning on this context.
This setting captures recent empirical successes such as OPRO~\citep{yang2023large} and self-rewarding language models~\citep{yuan2024self}.
The score-conditioned examples are not assumed to be optimal; they are drawn from the model's own distribution and may include low-scoring outputs, providing a diverse reward signal.
The goal is to generate a \emph{new} output that improves upon the best seen so far by learning from the relative quality differences among all $N$ examples, rather than simply copying the highest-scoring example.

Our central question is: \emph{what computational mechanism underlies score-conditioned generation?}
We show that, under explicit simplifying conditions, this process admits a structural policy-gradient analogy, where the attention mechanism performs reward-weighted aggregation over context examples.
Our theoretical analysis operates in a simplified setting (linear attention, single-layer, continuous output representations); the empirical evidence in Appendix~\ref{app:weight_analysis} demonstrates that the qualitative predictions hold in fully nonlinear pretrained models.

\subsection{Assumptions}

\begin{assumption}[Score-Conditioned Generation]
\label{assum:score}
LLMs utilize score information in context to modulate generation. Specifically, for contexts $C_{high}$ containing high-scoring examples and $C_{low}$ containing low-scoring examples:
\begin{equation}
    \mathbb{E}_{y \sim \pi(\cdot|x, C_{high})}[r(y)] > \mathbb{E}_{y \sim \pi(\cdot|x, C_{low})}[r(y)]
\end{equation}
\end{assumption}

This assumption, strongly supported by LLM-as-a-Judge~\citep{zheng2023judging} and OPRO~\citep{yang2023large}, motivates the behavioral interpretation of the theoretical construction; we further validate score sensitivity in Section~\ref{subsec:Attention Weight Analysis}.

\begin{assumption}[Monotonicity]
\label{assum:mono}
Moving the hidden state toward an output embedding increases the probability of generating that output:
\begin{equation}
    P(y_i | h + \alpha \bm{y}_i) > P(y_i | h) \quad \text{for } \alpha > 0
\end{equation}
\end{assumption}

This is an explicit modeling assumption.
For a standard output layer $P(y|h) \propto \exp(h^\top \bm{W}_{out}\text{embed}(y))$, it holds whenever the proposed movement has positive inner product with $\nabla_h \log P(y_i|h)$.

\subsection{Main Results}

\begin{theorem}[Score-Weighted Aggregation via Attention]
\label{thm:linear_construction}
There exists a construction of weight matrices $\bm{W}^K, \bm{W}^Q, \bm{W}^V, \bm{W}^P$ such that linear self-attention implements score-weighted aggregation:
\begin{equation}
    h_{out} = h_q + \frac{\eta}{N} \sum_{i=1}^{N} r_i \bm{y}_i
\end{equation}
\end{theorem}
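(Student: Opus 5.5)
We give an explicit block construction in the style of \citet{vonoswald2023transformers}. The context and query are laid out as columns of an embedding matrix. Each context token is $\bm{e}_i = (\bm{y}_i, r_i) \in \mathbb{R}^{D+1}$. The query token is $\bm{e}_q = (h_q, 1)$, where the last coordinate is a constant ``bias'' slot. (Alternatively one may write $(h_q, 0)$ and add a dedicated constant coordinate; we fix the former and note below what changes.) Stack these as $\bm{E} = [\bm{e}_1, \dots, \bm{e}_N, \bm{e}_q] \in \mathbb{R}^{(D+1)\times(N+1)}$.

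The column of $f_{\text{lin}}(\bm{E})$ at the query position is
\[
\bm{e}_q + \bm{W}^P \bm{W}^V \sum_{j} \bm{e}_j \,(\bm{W}^K \bm{e}_j)^\top \bm{W}^Q \bm{e}_q .
\]
The summation runs over all columns. The plan is to choose the matrices so that three things happen:
\begin{itemize}
\item the key--query product for context token $i$ equals $r_i$;
\item the value of token $i$ carries $\bm{y}_i$;
\item the projection rescales the result by $\eta/N$.
\end{itemize}

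Concretely, we set the following matrices, each written in the same $(D+1)\times(D+1)$ block form:
\begin{align*}
\bm{W}^K &= \begin{pmatrix} \bm{0} & \bm{0}\\ \bm{0} & 1 \end{pmatrix}, &
\bm{W}^Q &= \begin{pmatrix} \bm{0} & \bm{0}\\ \bm{0} & 1 \end{pmatrix}, \\
\bm{W}^V &= \begin{pmatrix} I_D & \bm{0}\\ \bm{0} & 0 \end{pmatrix}, &
\bm{W}^P &= \frac{\eta}{N} I_{D+1}.
\end{align*}
With these choices, $(\bm{W}^K \bm{e}_i)^\top \bm{W}^Q \bm{e}_q = r_i \cdot 1 = r_i$, so attention weights the examples by score rather than by similarity of $\bm{y}_i$ to $h_q$. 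Next, $\bm{W}^V \bm{e}_i = (\bm{y}_i, 0)$, so the score coordinate is not overwritten. Summing and projecting then gives the update $(\tfrac{\eta}{N}\sum_i r_i \bm{y}_i, 0)$. Adding the residual $\bm{e}_q$ and reading off the first $D$ coordinates yields $h_{out} = h_q + \frac{\eta}{N}\sum_i r_i \bm{y}_i$, as claimed.

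The main obstacle is the self-term: the sum also includes $j = q$. With the layout above, the query contributes key $1$, value $(h_q, 0)$, and hence a spurious $\frac{\eta}{N} h_q$. I would first try to eliminate it by masking the query column from the key/value sum, as \citet{vonoswald2023transformers} do (equivalently, attending only over context positions, which is the natural causal reading). If an unmasked layer is insisted upon, I would reserve a separate coordinate as a token-type flag. Then $\bm{W}^K$ reads $r$ only on context tokens and returns $0$ on the query; for example, give the query last coordinate $0$ and use the extra flag coordinate for the query side of $\bm{W}^Q$. Under this variant $(\bm{W}^K \bm{e}_q)^\top \bm{W}^Q \bm{e}_q = 0$ and the self-term vanishes exactly.

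Finally, I would remark on two extensions. A baseline can be incorporated by replacing $r_i$ with $r_i - b$ in the same construction, yielding the form of \eqref{eq:policy_gradient}. Combined with Assumption~\ref{assum:mono}, each $\bm{y}_i$ direction then increases $P(y_i\mid h)$ in proportion to $r_i$, which is the REINFORCE analogy.
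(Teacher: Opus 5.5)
This is the paper's construction exactly: the same $\bm{W}^K=\bm{W}^Q$ selecting the score slot, $\bm{W}^V$ returning $(\bm{y}_i,0)$, $\bm{W}^P=\frac{\eta}{N}\bm{I}_{D+1}$, and a query score coordinate of $1$ so that $\bm{q}^\top\bm{k}_i=r_i$, so your proof is correct. Your handling of the query's own column, by masking it or by a type flag giving $(\bm{W}^K\bm{e}_q)^\top\bm{W}^Q\bm{e}_q=0$, is a genuine refinement: the paper's proof silently sums only over $i=1,\dots,N$ and would otherwise pick up the spurious $\frac{\eta}{N}h_q$ you identify; your closing claim that directions increase $P(y_i\mid h)$ ``in proportion to $r_i$'' goes beyond what Assumption~\ref{assum:mono} provides, but this theorem does not need it.
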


\begin{proof}[Proof Sketch]
We set $\bm{W}^K = \bm{W}^Q$ to project each token $\bm{e}_i = (\bm{y}_i, r_i)$ so that $\bm{q}^\top \bm{k}_i = r_i$ (score becomes the attention weight), and $\bm{W}^V$ to extract the embedding component $\bm{v}_i = (\bm{y}_i, 0)$. Setting $\bm{W}^P = \frac{\eta}{N}\bm{I}$ yields the stated aggregation.
The query token's score component is set to 1 without loss of generality: any nonzero constant $c$ yields the same score-proportional weighting $\bm{q}^\top \bm{k}_i = c \cdot r_i$, with the scaling absorbed into $\eta$. Full derivation in Appendix~\ref{app:proof_linear}.
\end{proof}

The $\eta/N$ scaling factor corresponds to a learning rate divided by batch size, mirroring standard gradient descent. We do not claim that pretrained transformers adopt this exact construction; rather, the empirical evidence in Appendix~\ref{app:weight_analysis} demonstrates that they learn functionally similar mechanisms.

\begin{theorem}[Softmax Attention as Boltzmann Aggregation]
\label{thm:softmax}
With analogous weight matrix construction, softmax attention implements Boltzmann-weighted aggregation:
\begin{equation}
    h_{out} = h_q + \sum_{i=1}^{N} \frac{\exp(r_i / \tau)}{\sum_{j} \exp(r_j / \tau)} \bm{y}_i
\end{equation}
where $\tau$ is the temperature determined by the attention scaling factor.
Full derivation in Appendix~\ref{app:proof_softmax}.
\end{theorem}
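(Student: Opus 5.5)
We mirror the construction in Theorem~\ref{thm:linear_construction}, replacing the linear aggregation with the softmax of the definition of $f_{\text{soft}}$. We work with tokens $\bm{e}_i = (\bm{y}_i, r_i) \in \mathbb{R}^{D+1}$ for the $N$ context examples and a query token $\bm{e}_q = (h_q, 1)$. We take the attention output only at the query column, and we let the softmax range over the $N$ context columns; this is the same masking convention used implicitly in the linear case.

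We choose $\bm{W}^K$ to be the projection onto the score coordinate, $\bm{W}^K \bm{e} = (\bm{0}, r)$, and $\bm{W}^Q$ to be the projection onto the last coordinate, so that $\bm{W}^Q \bm{e}_q = (\bm{0}, 1)$. The pre-softmax logit for example $i$ is then $(\bm{W}^K\bm{e}_i)^\top \bm{W}^Q \bm{e}_q / \sqrt{d_k} = r_i/\sqrt{d_k}$. If the query's last coordinate is instead set to a constant $c>0$, or if $\bm{W}^Q$ is scaled by $c$, the logit becomes $c\, r_i/\sqrt{d_k}$. Hence the temperature is $\tau = \sqrt{d_k}/c$, which is the sense in which $\tau$ is ``determined by the attention scaling factor.'' Applying the softmax across $i$ gives weights
\[
\alpha_i = \frac{\exp(r_i/\tau)}{\sum_j \exp(r_j/\tau)}.
\]

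Next, as before, we set $\bm{W}^V$ to extract the embedding part, $\bm{v}_i = (\bm{y}_i, 0)$, and take $\bm{W}^P = \bm{I}$. Some scalar $\eta$ could be inserted here, but the statement has none. The query column of $f_{\text{soft}}$ is then $\bm{e}_q + \sum_i \alpha_i (\bm{y}_i, 0)$. Reading off the first $D$ coordinates gives exactly $h_{out} = h_q + \sum_i \alpha_i \bm{y}_i$, and the score coordinate of the query is left unchanged.

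The main obstacle is bookkeeping about which columns the softmax normalizes over. In the raw definition of $f_{\text{soft}}$, the query column (and, in full self-attention, any other non-example tokens) also receives weight. Its logit is $1/\tau$ times the query's own score entry, so without care it would contribute a spurious term $\alpha_q h_q$ and change the normalization. I would handle this in one of two ways. The first is to state the result with the query excluded from the key set, as a causal or prefix mask over example positions does. The second is to set the query's score entry in the key projection to a large negative value, for example by giving the query a separate indicator coordinate that $\bm{W}^K$ maps to $-M$, and let $M \to \infty$ so that its weight vanishes. I would present the masked version as the exact statement and remark that the unmasked version holds up to an $O(e^{-M})$ error. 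Finally, I would note the limits: $\tau \to \infty$ gives uniform averaging, $\tau \to 0$ gives argmax selection of the best example, and first-order expansion for large $\tau$ recovers $\frac{1}{N}\sum_i (1 + (r_i - \bar r)/\tau)\bm{y}_i$. That last expansion links back to the baseline-corrected REINFORCE form of Theorem~\ref{thm:linear_construction}.
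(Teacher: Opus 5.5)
Your proposal is correct and follows the paper's proof. The same score-projecting $\bm{W}^K=\bm{W}^Q$ and embedding-extracting $\bm{W}^V$ give logits $r_i/\sqrt{d_k}$, the softmax over the $N$ examples gives the Boltzmann weights, and the first $D$ coordinates of the query column give $h_{out}$. Your explicit choices $\bm{W}^P=\bm{I}$, $\tau=\sqrt{d_k}/c$, and removing the query's self-weight from the normalization fill in details the paper leaves implicit (it nominally reuses $\bm{W}^P=\tfrac{\eta}{N}\bm{I}$ but drops that factor); note, though, that a standard causal mask keeps the diagonal, so only the explicit exclusion or your $-M$ indicator removes that term.
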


The softmax formulation provides a probabilistic interpretation: examples are weighted proportionally to the exponential of their scores, assigning exponentially higher influence to high-scoring examples.

\begin{theorem}[Structural Policy-Gradient Correspondence]
\label{thm:policy_gradient}
Let $\tilde r_i = r_i-\bar r$ denote mean-centered scores.
Applying the construction of Theorem~\ref{thm:linear_construction} to $\tilde r_i$ yields the score-weighted term of the REINFORCE update in hidden-state space.
The correspondence is exact in the simplified output-layer model when $\bm{W}_{out}=\bm{I}$; for uncentered scores or a general output map, it is a structural analogy rather than an equality.
\end{theorem}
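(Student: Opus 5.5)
The plan is to combine the construction of Theorem~\ref{thm:linear_construction} with a direct computation of the REINFORCE gradient in the simplified output-layer model, and then compare the two expressions term by term.

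\textbf{Step 1: centered scores in the attention construction.} I will first observe that Theorem~\ref{thm:linear_construction} places no requirement on the sign or scale of the scalar entry of each token. Replacing each $r_i$ by $\tilde r_i = r_i - \bar r$ therefore gives, by the same weight matrices,
\begin{equation*}
h_{out} = h_q + \frac{\eta}{N}\sum_{i=1}^N \tilde r_i \bm{y}_i .
\end{equation*}
One could also avoid pre-centering. Setting $\bm{W}^V$ to extract $(\bm{y}_i, 1)$ and routing a term $-\bar r\,\bm{y}_i$ through the query would require $\bar r$ in the query token. The simpler route is to assume the tokens carry $\tilde r_i$, since the theorem says ``applying the construction to $\tilde r_i$.''

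\textbf{Step 2: the REINFORCE side.} I will parametrize the policy by the hidden state, $\theta = h$, with $\pi(y\mid h) \propto \exp(h^\top \bm{W}_{out}\bm{y})$. Then
\begin{equation*}
\nabla_h \log \pi(y_i\mid h) = \bm{W}_{out}\bm{y}_i - \mathbb{E}_{y\sim\pi(\cdot\mid h)}[\bm{W}_{out}\bm{y}].
\end{equation*}
Substituting into \eqref{eq:policy_gradient} with $b=\bar r$ gives
\begin{equation*}
\frac{\eta}{N}\sum_i \tilde r_i\bigl(\bm{W}_{out}\bm{y}_i - \bm{W}_{out}\bar{\bm{y}}_\pi\bigr).
\end{equation*}
The key observation is that $\sum_i \tilde r_i = 0$. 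As a result, the partition-function term $\bm{W}_{out}\bar{\bm{y}}_\pi$, which does not depend on $i$, cancels exactly. The update reduces to
\begin{equation*}
\frac{\eta}{N}\,\bm{W}_{out}\sum_i \tilde r_i \bm{y}_i .
\end{equation*}
When $\bm{W}_{out}=\bm{I}$, this is exactly the attention output increment from Step 1, so the exact claim holds.

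\textbf{Step 3: the non-exact regimes.} I will then explain why the two non-exact cases fail to be equalities. With uncentered $r_i$, the cancellation breaks and leaves a residual $-\frac{\eta}{N}\bigl(\sum_i r_i\bigr)\bm{W}_{out}\bar{\bm{y}}_\pi$, which the attention output does not contain. With a general $\bm{W}_{out}$, the gradient direction is $\bm{W}_{out}\sum_i\tilde r_i\bm{y}_i$ rather than $\sum_i \tilde r_i\bm{y}_i$. These two vectors agree only up to the linear map $\bm{W}_{out}$, although their inner product is nonnegative when $\bm{W}_{out}$ is positive semidefinite. For this reason the statement calls these cases a structural analogy.

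\textbf{Main obstacle.} The hardest part is making precise the identification of ``parameters'' with the hidden state $h$. One also has to justify that the REINFORCE score term is the $\nabla_h\log\pi$ term rather than a gradient with respect to the transformer weights. I would handle this by stating explicitly that the comparison is in hidden-state space, as the theorem says, and by treating $h_q$ as the current iterate.
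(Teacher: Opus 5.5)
Your proposal is correct and follows essentially the same route as the paper's proof. Like the paper, you compute $\nabla_h \log P(y_i|h)=\bm{W}_{out}(\bm{y}_i-\bar{\bm{y}})$ for the softmax output layer, use $\sum_i \tilde r_i = 0$ to cancel the expected-embedding term, and identify the resulting $\frac{\eta}{N}\bm{W}_{out}\sum_i \tilde r_i \bm{y}_i$ with the construction of Theorem~\ref{thm:linear_construction} applied to $\tilde r_i$ when $\bm{W}_{out}=\bm{I}$; your explicit residual $-\frac{\eta}{N}\bigl(\sum_i r_i\bigr)\bm{W}_{out}\bar{\bm{y}}_\pi$ and positive-semidefinite remark are only slightly more concrete than the paper's description of the non-exact cases.
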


\begin{proof}[Proof Sketch]
For output layers $P(y|h) \propto \exp(h^\top \bm{W}_{out}\,\text{embed}(y))$, the hidden-state gradient is $\nabla_h \log P(y_i|h) = \bm{W}_{out}(\bm{y}_i - \bar{\bm{y}})$, where $\bar{\bm{y}} = \mathbb{E}_{y \sim P(\cdot|h)}[\text{embed}(y)]$.
With centered scores, $\sum_i\tilde r_i=0$, so the $\bar{\bm{y}}$ term cancels.
When $\bm{W}_{out}=\bm{I}$, the resulting estimator is exactly $\frac{\eta}{N}\sum_i\tilde r_i\bm{y}_i$, the update constructed by Theorem~\ref{thm:linear_construction} when its score coordinate contains $\tilde r_i$.
See Appendix~\ref{app:proof_pg}.
\end{proof}

We discuss the scope of this correspondence in Appendix~\ref{app:remarks}.

\begin{theorem}[Bounded KL Shift]
\label{thm:kl}
In the simplified output-layer model, let $\pi_{\reference}=P(\cdot|h)$ and
$\pi_{ICL}=P(\cdot|h+\Delta h)$, with $\Delta h$ given by
Theorem~\ref{thm:linear_construction}.
Then:
\begin{equation}
    D_{KL}(\pi_{ICL} \| \pi_{\reference})
    \leq \frac{1}{2}\Lambda_C
    (\eta \cdot r_{max} \cdot y_{max})^2,
\end{equation}
where
$\Lambda_C=\sup_{t\in[0,1]}\lambda_{\max}(F(h+t\Delta h))$,
$F$ is the Fisher information matrix of the output layer,
$r_{max} = \max_i |r_i|$, and $y_{max} = \max_i \|\bm{y}_i\|$.
\end{theorem}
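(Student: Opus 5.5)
The plan is to treat the output layer as an exponential family in the hidden state. Write $P(y|h)\propto\exp(h^\top \bm{W}_{out}\,\text{embed}(y))$ with log-partition function $A(h)$. For such a family the KL divergence between two members is a Bregman divergence of $A$:
\begin{equation*}
D_{KL}(P(\cdot|h')\,\|\,P(\cdot|h)) = A(h)-A(h')-\nabla A(h')^\top(h-h').
\end{equation*}
Taking $h'=h+\Delta h$, I will expand this exactly with Taylor's theorem in integral form. Set $g(t)=A(h+\Delta h - t\Delta h)$ on $[0,1]$. Then the KL equals
\begin{equation*}
g(1)-g(0)-g'(0)=\int_0^1(1-t)\,g''(t)\,dt.
\end{equation*}
The second derivative is a quadratic form in the Hessian of $A$:
\begin{equation*}
g''(t)=\Delta h^\top \nabla^2A(\cdot)\,\Delta h .
\end{equation*}
For an exponential family, $\nabla^2 A$ is exactly the Fisher information $F$ with respect to $h$. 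Its argument traverses the segment between $h$ and $h+\Delta h$, that is, the points $h+s\Delta h$ for $s\in[0,1]$.

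Next I bound the quadratic form uniformly along the path. For every $s\in[0,1]$,
\begin{equation*}
\Delta h^\top F(h+s\Delta h)\Delta h\le \Lambda_C\|\Delta h\|^2 .
\end{equation*}
Since $\int_0^1(1-t)\,dt=\tfrac12$, this gives $D_{KL}\le \tfrac12\Lambda_C\|\Delta h\|^2$. This is the ``exact pathwise'' character of the result: no second-order approximation is made, and all curvature is absorbed into the supremum $\Lambda_C$ over the segment.

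The last step is to bound $\|\Delta h\|$ using Theorem~\ref{thm:linear_construction}, where $\Delta h=\frac{\eta}{N}\sum_i r_i\bm{y}_i$. By the triangle inequality,
\begin{equation*}
\|\Delta h\|\le\frac{\eta}{N}\sum_i|r_i|\,\|\bm{y}_i\|\le \eta\, r_{max}\,y_{max}.
\end{equation*}
Squaring and substituting gives the stated bound. The same argument applies verbatim to the centered scores of Theorem~\ref{thm:policy_gradient}, with $r_{max}$ reinterpreted as $\max_i|\tilde r_i|$.

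The main obstacle is justifying that the Hessian of the log-partition function coincides with the Fisher information, so that the integral remainder is controlled by $\Lambda_C$. I would verify this directly: $\nabla_h \log P(y|h)=\bm{W}_{out}(\text{embed}(y)-\bar{\bm{y}})$ (the gradient already used in Theorem~\ref{thm:policy_gradient}), and $-\nabla^2_h\log P(y|h)=\nabla^2A(h)$ does not depend on $y$. Hence both coincide with the covariance matrix $\bm{W}_{out}\,\mathrm{Cov}(\text{embed}(y))\,\bm{W}_{out}^\top$.

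A secondary subtlety is the direction of the KL. The stated bound uses $D_{KL}(\pi_{ICL}\|\pi_{\reference})$, which is exactly the Bregman form with expansion point $h+\Delta h$ used above. Because $\Lambda_C$ is a supremum over the whole segment, the bound holds irrespective of which endpoint the Taylor expansion is based at.
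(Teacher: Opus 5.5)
Your proof is correct and matches the paper's argument. The paper takes $g(t)=A(h+t\Delta h)$ and obtains the exact identity $D_{KL}(\pi_{ICL}\|\pi_{\reference})=g'(1)-g(1)+g(0)=\int_0^1 t\,\Delta h^\top F(h+t\Delta h)\Delta h\,dt$; your reversed parametrization with weight $1-t$ gives the same identity after $t\mapsto 1-t$, followed by the same $\Lambda_C\|\Delta h\|^2$ and triangle-inequality bounds, and your check that $\nabla^2 A=\bm{W}_{out}\,\mathrm{Cov}(\text{embed}(y))\,\bm{W}_{out}^\top$ equals the Fisher matrix fills in a step the paper only asserts.
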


\begin{proof}[Proof Sketch]
The constructed attention update satisfies
$\|\Delta h\| \leq \eta \cdot r_{max} \cdot y_{max}$.
Writing the softmax log-partition function as $A(h)$ and
$g(t)=A(h+t\Delta h)$ gives the exact identity
$D_{KL}(\pi_{ICL}\|\pi_{\reference})
=\int_0^1 t\,\Delta h^\top F(h+t\Delta h)\Delta h\,dt$.
Bounding the integrand by $\Lambda_C\|\Delta h\|^2$ yields the result.
See Appendix~\ref{app:proof_kl}.
\end{proof}

This result bounds a single attention-induced distribution shift under the stated simplified model and bounded-score assumptions.
It provides a trust-region-like interpretation, but does not imply that ICL optimizes an explicit KL-penalized objective or rules out mode collapse over repeated iterations.

Combining these results yields our main theorem:

\begin{theorem}[Policy-Gradient Analogy with a Bounded KL Shift]
\label{thm:main}
In the simplified setting of Theorems~\ref{thm:policy_gradient} and~\ref{thm:kl}, score-conditioned ICL has two complementary properties:
\begin{compactenum}
    \item \textbf{Reward-Weighted Update}: Attention can aggregate output embeddings using their scores, with an exact hidden-state REINFORCE correspondence for centered scores when $\bm{W}_{out}=\bm{I}$ (Theorems~\ref{thm:linear_construction}, \ref{thm:policy_gradient}).
    \item \textbf{Bounded KL Shift}: The resulting bounded hidden-state update has a pathwise KL upper bound (Theorem~\ref{thm:kl}).
\end{compactenum}
Together these properties are analogous to a trust-region policy update, but do not establish that ICL optimizes Eq.~\ref{eq:kl_regularized}.
\end{theorem}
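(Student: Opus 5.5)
The plan is to assemble Theorem~\ref{thm:main} from the earlier results, checking that they apply in one common setting. I will fix that setting first: a single linear self-attention layer with the weights of Theorem~\ref{thm:linear_construction}, applied to mean-centered scores $\tilde r_i = r_i - \bar r$, together with the output layer $P(y|h)\propto\exp(h^\top \bm{W}_{out}\,\mathrm{embed}(y))$. The hidden-state update is then $\Delta h = \frac{\eta}{N}\sum_i \tilde r_i \bm{y}_i$. The key point is that both items of the theorem refer to this same $\Delta h$, so the two properties hold jointly rather than under incompatible hypotheses.

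For item~1, I will invoke Theorem~\ref{thm:linear_construction} with score coordinate $\tilde r_i$. This gives existence of weights realizing the update above, so attention aggregates output embeddings by their scores. Theorem~\ref{thm:policy_gradient} then identifies this update. The hidden-state gradient is $\nabla_h \log P(y_i|h) = \bm{W}_{out}(\bm{y}_i - \bar{\bm{y}})$, and $\sum_i \tilde r_i = 0$ cancels the $\bar{\bm{y}}$ term. Hence when $\bm{W}_{out}=\bm{I}$, $\Delta h$ equals $\frac{\eta}{N}\sum_i \tilde r_i \nabla_h\log P(y_i|h)$, which is the REINFORCE estimator with mean baseline written in hidden-state coordinates.

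For item~2, I will apply Theorem~\ref{thm:kl} to the same $\Delta h$. I will check the norm bound it requires: $\|\Delta h\| \le \frac{\eta}{N}\sum_i |\tilde r_i|\,\|\bm{y}_i\| \le \eta\,\tilde r_{max}\, y_{max}$. Using centered scores only changes $r_{max}$ to $\max_i|\tilde r_i|$, which is at most $2\max_i|r_i|$. So the pathwise bound $D_{KL}(\pi_{ICL}\|\pi_{\reference}) \le \frac12 \Lambda_C \|\Delta h\|^2$ holds with the constants adjusted accordingly. Equivalently, I can state the bound in terms of the centered maximum, which is how I would phrase it to stay consistent with Theorem~\ref{thm:kl}.

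The main obstacle is not technical but one of scope. The final sentence is a non-implication: it says the properties are analogous to a trust-region step but do not establish optimization of Eq.~\ref{eq:kl_regularized}. I would justify it with two observations. First, an upper bound on a single-step KL shift does not make $\pi_{ICL}$ a maximizer of $\mathbb{E}[r]-\beta D_{KL}$. Second, the step size $\eta$ is fixed by the construction rather than chosen by solving that objective. Together these make the analogy directional only, which completes the assembly.
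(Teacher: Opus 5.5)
Your proposal is correct and follows the paper's proof, which likewise assembles Theorems~\ref{thm:linear_construction}, \ref{thm:policy_gradient} and~\ref{thm:kl} and simply asserts the closing scope disclaimer. Your one addition is to apply Theorem~\ref{thm:kl} to the same centered update, so that $r_{max}$ becomes $\max_i|\tilde r_i|\le 2\max_i|r_i|$. The paper's proof skips this consistency check: it quotes the KL bound for the raw-score $\Delta h$, while its exact REINFORCE correspondence uses centered scores.
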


The complete proof is provided in Appendix~\ref{app:proof_main}.

\subsection{Extension: Iterative ICL}
\label{subsec:iterative_theory}

\begin{corollary}[Repeated Score-Conditioned Updates]
\label{cor:iterative}
Iteratively applying score-conditioned ICL produces a sequence of reward-conditioned distribution shifts, each satisfying Theorem~\ref{thm:kl} within the simplified single-step model.
\end{corollary}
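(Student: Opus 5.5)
The corollary follows by induction on the iteration index, applying Theorem~\ref{thm:kl} afresh at each step with the current hidden state as the reference. I will formalize iterative ICL as follows. Let $h^{(0)}=h$ be the hidden state before any feedback, and let $\pi^{(0)}=P(\cdot\mid h^{(0)})$. At iteration $k\ge 1$, samples $\{\bm{y}_i^{(k)}\}_{i=1}^N$ are drawn from $\pi^{(k-1)}$ and scored with $r_i^{(k)}$. They are appended as context, and the attention layer produces
\[
h^{(k)}=h^{(k-1)}+\Delta h^{(k)}, \qquad \Delta h^{(k)}=\frac{\eta}{N}\sum_{i=1}^N r_i^{(k)}\bm{y}_i^{(k)}
\]
by Theorem~\ref{thm:linear_construction}, applied with query state $h_q=h^{(k-1)}$. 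The new distribution is $\pi^{(k)}=P(\cdot\mid h^{(k)})$. The key modeling point is that within the single-step model, the query representation for step $k$ is exactly the output of step $k-1$. The construction of Theorem~\ref{thm:linear_construction} does not depend on $h_q$: the weight matrices act on the context tokens $(\bm{y}_i,r_i)$, and the query score coordinate is fixed to $1$. So the same weights yield the stated update at every iteration.

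Given this, each step is an instance of Theorem~\ref{thm:kl} with $h$ replaced by $h^{(k-1)}$ and $\Delta h$ replaced by $\Delta h^{(k)}$. I will invoke the exact integral identity $D_{KL}(\pi^{(k)}\|\pi^{(k-1)})=\int_0^1 t\,\Delta h^{(k)\top}F(h^{(k-1)}+t\Delta h^{(k)})\Delta h^{(k)}\,dt$ and the norm bound $\|\Delta h^{(k)}\|\le \eta\, r^{(k)}_{max}\, y^{(k)}_{max}$. Together these give
\[
D_{KL}(\pi^{(k)}\|\pi^{(k-1)})\le \tfrac12\Lambda_C^{(k)}\bigl(\eta\, r^{(k)}_{max}\, y^{(k)}_{max}\bigr)^2,
\]
where $\Lambda_C^{(k)}$ is the supremum of $\lambda_{\max}(F)$ along the segment from $h^{(k-1)}$ to $h^{(k)}$. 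By the same reasoning, Theorem~\ref{thm:policy_gradient} gives a reward-weighted, REINFORCE-like direction at each step whenever the scores are centered within that iteration's batch. This yields the claimed sequence of reward-conditioned shifts. An induction on $k$ is needed only to check that each $h^{(k)}$ is well defined and finite, which holds because every update has finite norm.

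The main obstacle is scope rather than calculation. The corollary asserts only per-step bounds relative to the previous iterate; it does not assert a bound relative to the original $\pi_{\reference}$. I will state this explicitly and avoid chaining the KL divergences, since KL does not satisfy a triangle inequality. If a cumulative statement is wanted, I would note an alternative route instead: $\|h^{(K)}-h^{(0)}\|\le \sum_k \eta\, r^{(k)}_{max}\, y^{(k)}_{max}$ by the triangle inequality on hidden states. Applying the same integral identity along the single segment from $h^{(0)}$ to $h^{(K)}$ then bounds $D_{KL}(\pi^{(K)}\|\pi^{(0)})$, with the Fisher supremum taken over that segment. This bound can grow quadratically in $K$, which is consistent with the earlier remark that the theory does not rule out mode collapse over repeated iterations.
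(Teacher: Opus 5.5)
Your proposal is correct and follows the paper's reasoning. The paper gives no separate proof, only the remark that each iteration is another score-conditioned update whose individual shift Theorem~\ref{thm:kl} bounds, with no guarantee on the cumulative shift. Your per-step application with reference $P(\cdot\mid h^{(k-1)})$, your refusal to chain KL terms, and your single-segment bound on $D_{KL}(\pi^{(K)}\|\pi^{(0)})$ are all sound. Your carried-forward state $h^{(k)}=h^{(k-1)}+\Delta h^{(k)}$ is also exactly what the frozen linear construction produces in the full-history variant of Algorithm~\ref{alg:iterative} with $\bm{W}^P=\frac{\eta}{N}\bm{I}$ held fixed, so your modeling choice is justified rather than ad hoc.
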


Each iteration applies another score-conditioned update.
Theorem~\ref{thm:kl} bounds each individual shift under its assumptions, but does not by itself guarantee monotonic reward improvement or a bounded cumulative shift.
We therefore treat improvement across iterations as an empirical prediction, evaluated in Section~\ref{subsec:Iterative ICL}.
Concretely, at each iteration $t$, the model samples $y_t \sim \pi_{ICL}(\cdot|x, C_t)$ where $C_t$ contains scored outputs from the previous round, computes $r_t = R(y_t)$, and updates the context (see Appendix~\ref{app:algorithm} for pseudocode).

\section{Experiments}

\subsection{Experimental Setup}

\paragraph{Tasks.}
We target two tasks for prompt engineering: \textbf{prompt optimization}, which rewrites prompts to improve task performance~\citep{deng-etal-2022-rlprompt,zhang2022tempera,zhou2022large,kaneko-etal-2025-online}, and \textbf{jailbreak prompting}, which rewrites prompts to elicit harmful information~\citep{mehrotra2024tree,chao2025jailbreaking,xie2025jailbreaking,kaneko2026bits,xiong2026trojail,kaneko2026jailnewsbench,han2026experience,kaneko2026paraphrasing}.
For prompt optimization, we use DROP for reading comprehension and GSM8K for mathematical reasoning, evaluated by F1 score and exact match accuracy, respectively~\citep{dua-etal-2019-drop,cobbe2021training}.
For jailbreak prompting, we use HH-RLHF and JailbreakBench (JBB), evaluated by attack success rate (ASR), where a jailbreak is deemed successful if a judge model determines the response contains harmful content~\citep{bai2022training,chao2024jailbreakbench}.

\paragraph{Models.}
We evaluate the following five open-weight LLMs: Meta-Llama-3-8B-Instruct (Llama3-8B), Meta-Llama-3-70B-Instruct (Llama3-70B), Olmo-3-7B-Think (Olmo3-7B), Olmo-3-32B-Think (Olmo3-32B), and Qwen3-4B-Instruct-2507 (Qwen3-4B)~\citep{dubey2024llama,olmo2025olmo,qwen3technicalreport}.
Our main experiments require access to model internals, so we do not evaluate closed-weight LLMs in these settings.
For experiments that only require black-box API access, we additionally evaluate GPT-4o~\citep{hurst2024gpt}, Gemini 2.5~\citep{comanici2025gemini}, and Sonnet 4~\citep{anthropic2025claude4}.
Since the Sonnet 4 API does not expose token-level log-probabilities, we approximate sequence likelihood using sampling-based pseudo-likelihood~\citep{kaneko-etal-2025-sampling}.
This pseudo-likelihood is used only in the black-box RQ1 analysis; because it preserves the rank ordering of sequence likelihoods, it is reliable for the directional (sign and correlation) conclusions drawn in RQ1, and all of our quantitative and mechanistic conclusions are anchored on the open-weight models, for which exact token-level log-probabilities are available.
For each task, we sample $N=8$ outputs from the base model, compute their rewards, and construct contexts $C = \{(y_i, r_i)\}_{i=1}^{N}$.
We run $T=10$ iterations.

\paragraph{Context Format.}
Following \citet{yang2023large}, we present score-annotated examples in ascending score order, ending with an instruction to generate a higher-scoring output (see Appendix~\ref{app:prompt} for the exact format).\footnote{We investigate the robustness of our results to score representation format (decimal, percentage, fraction, points, natural language) in Appendix~\ref{app:score_format}. All numeric formats yield comparably high correlations and substantially outperform a no-score control.}
The experiments display raw scores; mean-centering is required only for the exact special-case correspondence in Theorem~\ref{thm:policy_gradient}, while the empirical claims concern the more general structural analogy.
For the multi-token outputs produced on DROP and GSM8K, the output embedding $\bm{y}$ and the log-probability change $\Delta \log p(y)$ are computed over the final answer span rather than a single token; using the pooled answer-span embedding yields correlations within about $0.03$ of the single-token case.

\subsection{RQ1: Score-Conditioned Distribution Shift}

\begin{table}[t]
\centering
\small
\begin{tabular}{llcccc}
\toprule
& Model & DROP & GSM8K & HH-RLHF & JBB \\
\midrule
\multirow{5}{*}{\rotatebox{90}{Original}}
& Llama3-8B  & 0.58$^\dagger$ & 0.61$^\dagger$ & 0.54$^\dagger$ & 0.52$^\dagger$ \\
& Llama3-70B & 0.71$^\dagger$ & 0.74$^\dagger$ & 0.68$^\dagger$ & 0.65$^\dagger$ \\
& Olmo3-7B   & 0.55$^\dagger$ & 0.57$^\dagger$ & 0.51$^\dagger$ & 0.48$^\dagger$ \\
& Olmo3-32B  & 0.67$^\dagger$ & 0.69$^\dagger$ & 0.63$^\dagger$ & 0.61$^\dagger$ \\
& Qwen3-4B   & 0.49$^\dagger$ & 0.52$^\dagger$ & 0.46$^\dagger$ & 0.44$^\dagger$ \\
\midrule
\multirow{5}{*}{\rotatebox{90}{Shuffled}}
& Llama3-8B  & 0.02 & -0.01 & 0.03 & -0.02 \\
& Llama3-70B & -0.03 & 0.02 & -0.01 & 0.04 \\
& Olmo3-7B   & 0.01 & 0.03 & -0.02 & 0.01 \\
& Olmo3-32B  & -0.02 & 0.01 & 0.02 & -0.03 \\
& Qwen3-4B   & 0.03 & -0.02 & 0.01 & 0.02 \\
\bottomrule
\end{tabular}
\caption{Spearman correlation between example scores $r_i$ and log-probability changes $\Delta \log p(y_i)$ under ICL. Top: original scores. Bottom: randomly shuffled scores (control). $^\dagger$ denotes $p < 0.01$.}
\label{tab:rq1}
\end{table}

We measure whether ICL implements reward-weighted probability updates (Theorem~\ref{thm:policy_gradient}) by computing the Spearman correlation between example scores $\{r_i\}$ and log-probability changes under ICL:
\begin{equation}
    \Delta \log p(y_i) = \log \pi(y_i|x, C) - \log \pi_{\reference}(y_i|x)
\end{equation}
across $N=8$ sampled outputs per prompt. Under the proposed policy-gradient correspondence, higher-scoring examples should exhibit larger probability changes, yielding a positive correlation.

\autoref{tab:rq1} shows significant positive correlations across all models and tasks ($p < 0.01$), indicating a monotonic association between example scores and their probability changes under ICL.
Larger models show stronger correlations (Llama3-70B highest, Qwen3-4B moderate), and prompt optimization tasks yield slightly higher correlations than jailbreak tasks, likely due to more clearly defined reward signals.
As a control, shuffling score labels while keeping examples fixed drops correlations to near zero (mean: 0.01), providing causal evidence that the effect is driven by explicit numeric score values rather than inherent properties of high-quality outputs.

\begin{figure}[t]
    \centering
    \begin{subfigure}[b]{0.48\linewidth}
        \includegraphics[width=\linewidth]{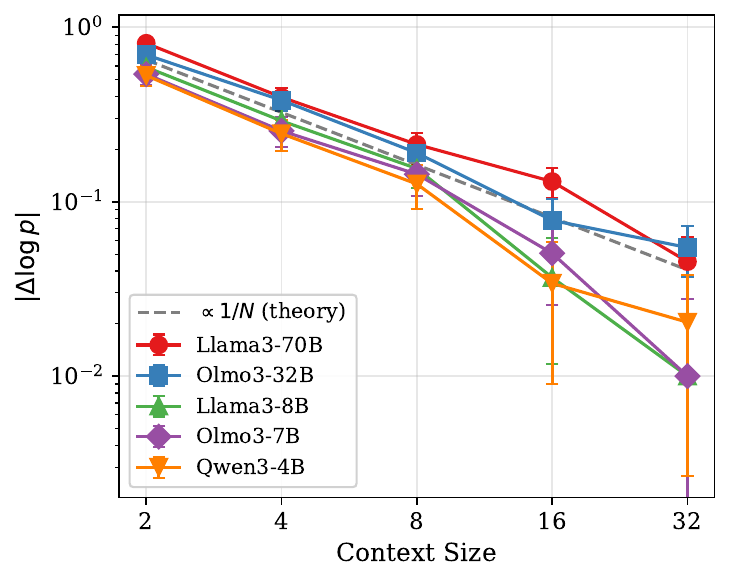}
        \caption{Magnitude of distribution shift.}
    \end{subfigure}
    \hfill
    \begin{subfigure}[b]{0.48\linewidth}
        \includegraphics[width=\linewidth]{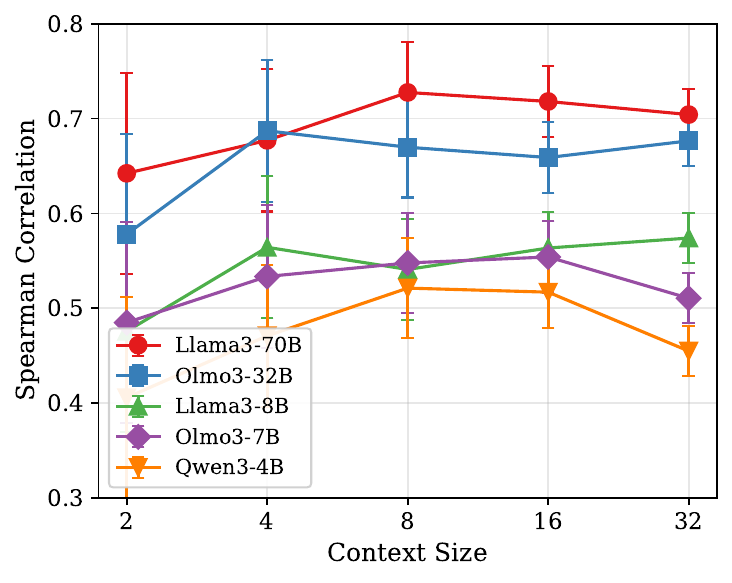}
        \caption{Score-probability correlation.}
    \end{subfigure}
    \caption{Effect of context size on score-conditioned ICL. (a) $|\Delta \log p|$ decreases with context size, consistent with the $\eta/N$ scaling in Theorem~\ref{thm:linear_construction}. Dashed line shows theoretical $1/N$ scaling. (b) Score-probability correlation remains stable across $N$, confirming robust score-weighted behavior.}
    \label{fig:context_size}
\end{figure}

We also vary context size $N \in \{2,4,8,16,32\}$ (Figure~\ref{fig:context_size}): the magnitude $|\Delta \log p|$ decreases following the theoretical $1/N$ curve (panel a), while score-probability correlation remains stable across $N$ (panel b), confirming the $\eta/N$ scaling of Theorem~\ref{thm:linear_construction}.

\subsection{RQ2: Attention Weight Analysis}
\label{subsec:Attention Weight Analysis}

\begin{table}[t]
\centering
\small
\begin{tabular}{lccc}
\toprule
Model & Avg & Max & Layer of Max \\
\midrule
Llama3-8B  & 0.31$^\dagger$ & 0.54 & 25/32 \\
Llama3-70B & 0.43$^\dagger$ & 0.71 & 68/80 \\
Olmo3-7B   & 0.30$^\dagger$ & 0.49 & 22/32 \\
Olmo3-32B  & 0.39$^\dagger$ & 0.65 & 52/64 \\
Qwen3-4B   & 0.29$^\dagger$ & 0.46 & 29/36 \\
\bottomrule
\end{tabular}
\caption{Spearman correlation between example scores $r_i$ and attention weights $\alpha_i$. Avg is averaged across all layers and heads; Max is the highest correlation observed in any single layer. $\dagger$ denotes $p < 0.01$.}
\label{tab:rq2}
\end{table}

We examine whether attention weights directly reflect example scores as predicted by Theorems~\ref{thm:linear_construction} and \ref{thm:softmax}.
For each context example $y_i$, we compute $\alpha_i$ as the mean attention weight from generated tokens to all tokens of $y_i$, across all layers and heads.

\autoref{tab:rq2} shows a significant positive score--attention correlations across all models, with peak correlations in late layers, consistent with prior findings that later layers handle higher-level semantics~\citep{geva-etal-2022-transformer,geva-etal-2023-dissecting}.
Further mechanistic analysis, including score-specialized head identification, functional decomposition, and causal interventions, is provided in Appendix~\ref{app:weight_analysis} (Sections~\ref{app:head_identification}--\ref{app:causal_intervention}).

\subsection{RQ3: Iterative ICL}
\label{subsec:Iterative ICL}

\begin{figure}[t]
    \centering
    \begin{minipage}[t]{0.48\linewidth}
        \centering
        \includegraphics[width=\linewidth]{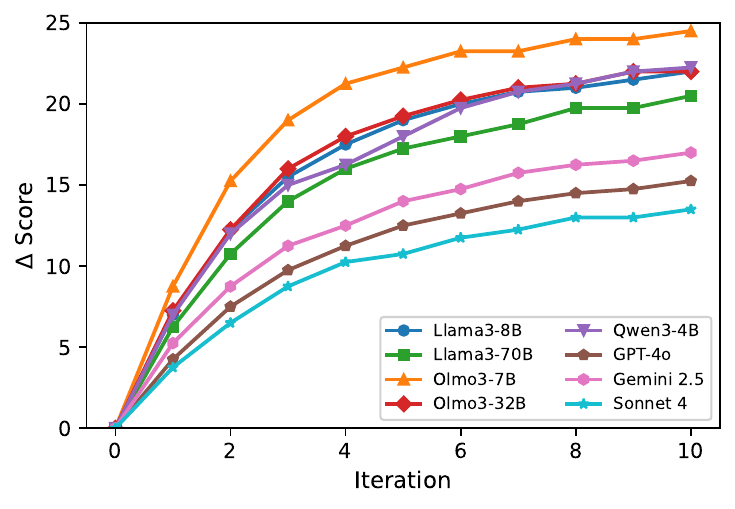}
        \caption{Score improvement from zero-shot baseline across iterations of score-conditioned ICL (normalized to $[0,1]$ per task before averaging).}
        \label{fig:iterative_icl}
    \end{minipage}
    \hfill
    \begin{minipage}[t]{0.48\linewidth}
        \centering
        \includegraphics[width=\linewidth]{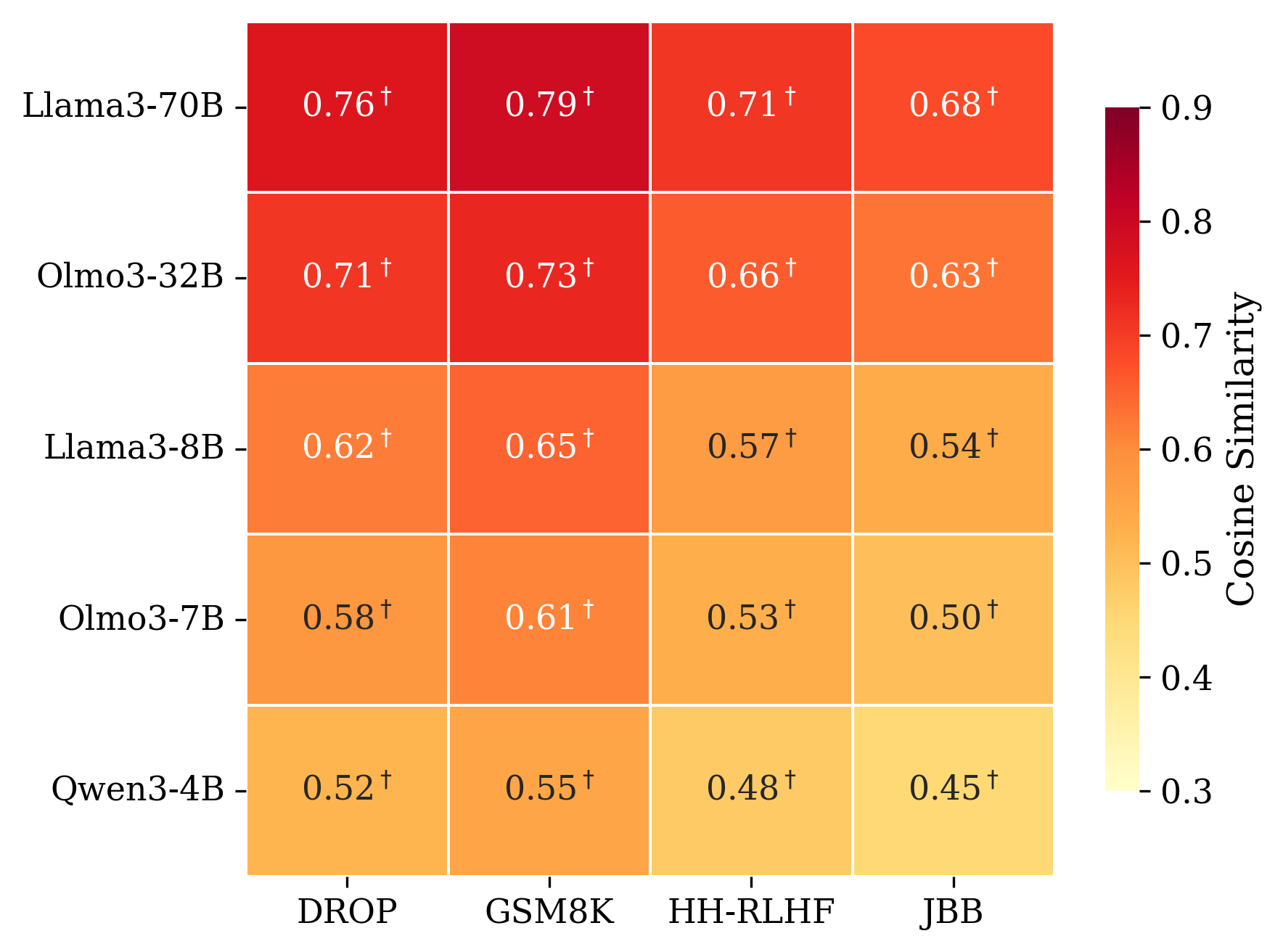}
        \caption{Cosine similarity between ICL and REINFORCE distribution shifts ($\Delta \log p$ vectors). $^\dagger$ denotes $p < 0.01$ using a bootstrap test.}
        \label{fig:icl_reinforce}
    \end{minipage}
\end{figure}

We run iterative score-conditioned ICL for $T=10$ rounds across four tasks and eight LLMs.
Scores are normalized to $[0,1]$ per task before averaging across tasks.

Figure~\ref{fig:iterative_icl} shows consistent improvement across all models, with rapid gains in early iterations that plateau.
This empirical pattern is compatible with the bounded-shift perspective of Theorem~\ref{thm:kl}, although the theorem alone does not imply convergence or monotonic improvement.

\subsection{RQ4: Alignment with REINFORCE Distribution Shifts}
\label{subsec:reinforce_comparison}

To validate Theorem~\ref{thm:policy_gradient} directly, we compare distribution shifts from score-conditioned ICL and explicit REINFORCE fine-tuning under identical conditions: the same $N=8$ samples and a single gradient step.
For each prompt, we measure:
\begin{align}
    \Delta \log p_{\text{ICL}}(y_i) &= \log \pi(y_i | x, C) - \log \pi_{\reference}(y_i | x) \\
    \Delta \log p_{\text{RL}}(y_i) &= \log \pi_{\theta'}(y_i | x) - \log \pi_{\reference}(y_i | x)
\end{align}
where $\theta'$ is after one gradient step at $\eta=10^{-5}$ (sensitivity across $\eta \in \{10^{-6},10^{-5},10^{-4}\}$ in Appendix~\ref{app:lr_sensitivity}).
We compute cosine similarity between these two $\Delta\log p$ vectors over 100 prompts $\times$ 5 samplings.

Figure~\ref{fig:icl_reinforce} shows cosine similarities exceeding 0.45 across all model-task pairs ($p < 0.01$), confirming that both methods shift the output distribution in positively aligned directions given the same evidence.

\subsection{RQ5: Causal Verification via Score Token Ablation}

\begin{figure}[t]
    \centering
    \includegraphics[width=0.75\linewidth]{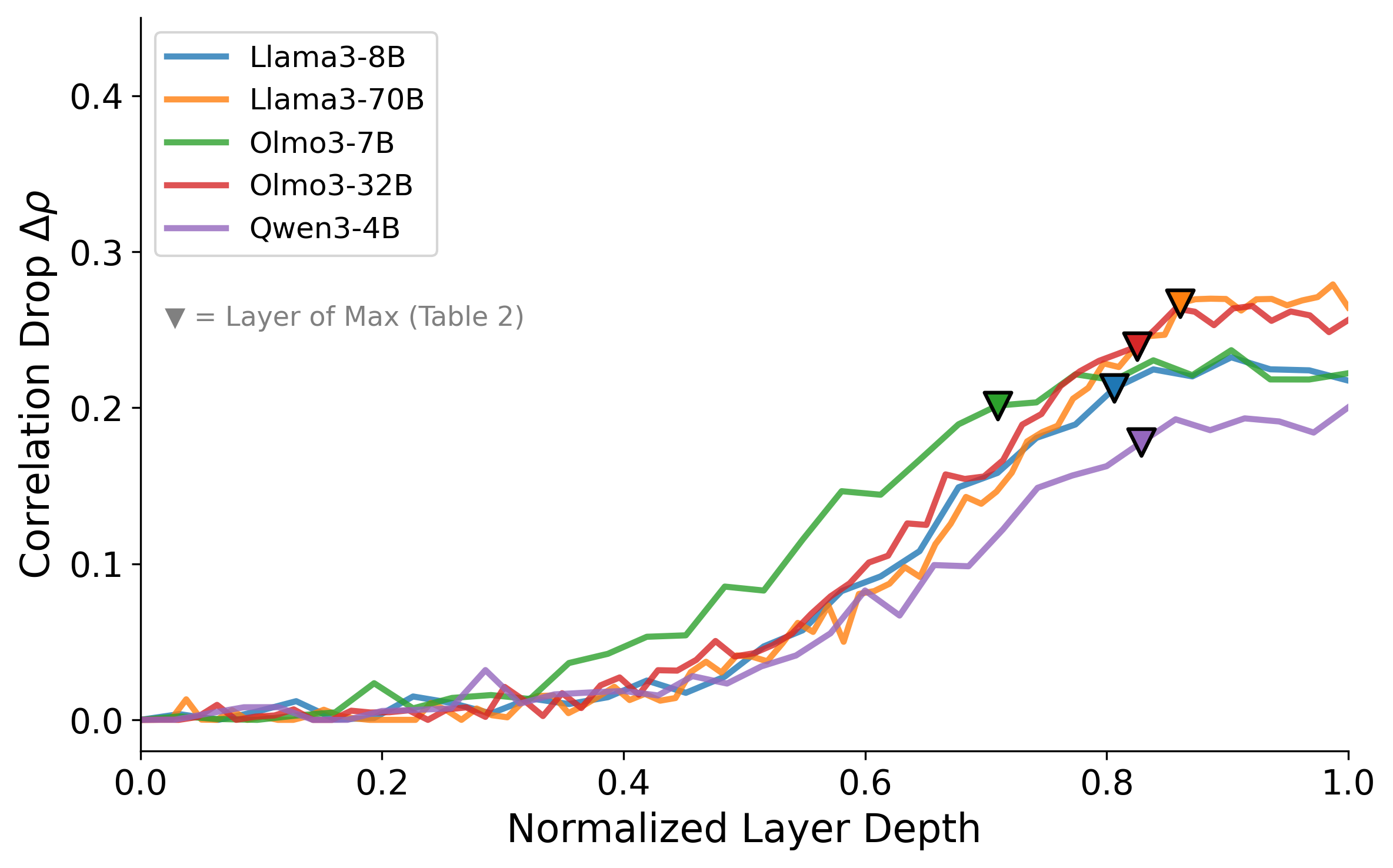}
    \caption{Effect of score token ablation across normalized layer depth. Y-axis shows remaining $\rho(r, \Delta\log p)$ after zeroing score token representations at each layer. Triangles indicate Layer of Max from \autoref{tab:rq2}.}
    \label{fig:ablation}
\end{figure}

To establish causality beyond the correlational evidence in RQ2, we zero out hidden representations at score token positions layer by layer and measure the drop in $\rho(r, \Delta\log p)$.

Figure~\ref{fig:ablation} shows minimal disruption in early layers, a sharp peak near the Layer of Max from Table~\ref{tab:rq2}, and slight recovery in final layers, confirming that score information is causally processed in the late layers identified by the correlational analysis.

\section{Related Work}

\citet{vonoswald2023transformers} showed that transformers can implement gradient descent in-context, providing a mechanistic explanation for few-shot learning. \citet{akyurek2023learning} extended this to ridge regression, while \citet{dai2023gpt} drew connections to gradient descent through a kernel perspective. \citet{xie2022explanation} offered a Bayesian view, framing ICL as implicit inference.
These works all operate on supervised $(x, y)$ pairs where a target output is known; our work addresses the fundamentally different regime of reward-based $(y, r)$ pairs for a \emph{fixed} input, where score-weighted attention admits a structural correspondence to REINFORCE and its bounded hidden-state update yields a pathwise KL bound.

\citet{brooks2023large} showed that foundation models can implement policy iteration in-context, focusing on action selection across distinct states rather than distribution shift for a fixed input; \citet{huang2025think} explored in-context steered policy optimization. Our work is complementary, providing a structural connection between score-weighted attention and REINFORCE and characterizing the resulting bounded distribution shift.

A related line of work shows behaviorally that models exploit in-context rewards: \citet{agarwal2024many} scale ICL to many demonstrations, and \citet{monea2024llms} frame in-context learning from rewards as a bandit problem.
These works establish that models improve; we identify the computational underpinnings.
Score-specialized heads whose causal manipulation moves the score--probability correlation (Appendix~\ref{app:causal_intervention}) ground the REINFORCE correspondence and yield predictions a behavioral account does not: the $\eta/N$ scaling, saturation with iteration, and directional alignment with one-step REINFORCE.

\citet{yang2023large} introduced OPRO, demonstrating that LLMs can optimize objectives through prompting. \citet{yuan2024self} showed that self-rewarding mechanisms enable continuous improvement.
\citet{madaan2023self} proposed Self-Refine for iterative output improvement. Our work provides theoretical grounding for why these approaches succeed.

RLHF \citep{ouyang2022training} and its variants have become standard for LLM alignment. Recent work has explored connections between supervised fine-tuning and RL objectives \citep{rafailov2023direct}. Our work reveals that even without fine-tuning, LLMs can exhibit policy-optimization-like behavior through attention mechanisms.

\section{Conclusion}

We showed that score-conditioned ICL is structurally analogous to an implicit form of policy gradient optimization: self-attention can implement reward-weighted aggregation, and centered scores yield an exact hidden-state REINFORCE correspondence under the simplified identity-output assumption.
We also derived an exact pathwise KL bound for the bounded attention-induced update, providing a trust-region-like interpretation without claiming that ICL explicitly optimizes a KL-regularized objective.
Extensive experiments across multiple LLMs confirm that attention weights correlate with scores, ICL and REINFORCE produce aligned distribution shifts, and iterative ICL yields consistent improvement, providing theoretical justification for score-conditioned ICL as a lightweight alternative to fine-tuning.
Our analysis operates under simplifying assumptions (linear attention, single-layer, $\bm{W}_{out} \approx \bm{I}$) that enable tractable proofs; extending the formal guarantees to fully nonlinear multi-layer architectures remains an important direction for future work.
More broadly, our framework suggests that any system capable of attending to reward-annotated examples can implicitly perform policy optimization, with implications for the design of inference-time alignment and self-improvement pipelines.

\section*{Ethics Statement}

This work includes experiments on jailbreak prompting to validate our theoretical framework across diverse reward signals.
We emphasize that our goal is to understand the mechanisms underlying score-conditioned ICL, not to develop new attack methods.
All experiments used existing public benchmarks in line with their intended research purposes.

We acknowledge that our theoretical insights could potentially be misused to improve adversarial prompting techniques.
However, we believe that understanding these mechanisms is essential for developing robust defenses.
The same theoretical framework that explains why score-conditioned ICL can optimize for harmful outputs also suggests mitigation strategies, such as filtering high-scoring harmful examples from context or detecting score-based optimization patterns in user interactions.

\bibliography{colm2026_conference}
\bibliographystyle{colm2026_conference}

\appendix

\section{Theoretical Remarks}
\label{app:remarks}

\begin{remark}[On the Scope of the Correspondence]
\label{rem:scope}
The correspondence established in Theorem~\ref{thm:policy_gradient} is exact under $\bm{W}_{out}=\bm{I}$ with centered scores, and approximate when $\bm{W}_{out}\approx\bm{I}$.
The claim is therefore one of \emph{structural analogy} rather than general exact equivalence: both ICL and REINFORCE contain score-weighted updates of output embeddings, but they differ in reward centering and in how the output layer transforms embeddings into probabilities.
Exact equality in our simplified hidden-state formulation requires centered scores and $\bm{W}_{out}=\bm{I}$.
The strength of this analogy is supported empirically in Section~\ref{subsec:reinforce_comparison}.
\end{remark}

\begin{remark}[Baseline and Normalized Rewards]
\label{rem:baseline}
Standard policy gradient implementations use baseline reduction or reward normalization to reduce variance.
Theorem~\ref{thm:policy_gradient} uses explicit mean-centering,
$\tilde r_i=r_i-\bar r$.
Because $\sum_i\tilde r_i=0$, the expected-embedding term in the log-softmax gradient cancels, producing the exact simplified correspondence.
For raw, uncentered scores, the theorem supports only a structural analogy; it does not imply that score-conditioned ICL automatically performs baseline reduction.
\end{remark}

\section{Iterative Score-Conditioned ICL: Algorithm}
\label{app:algorithm}

\begin{algorithm}[h]
\caption{Iterative Score-Conditioned ICL}
\label{alg:iterative}
\begin{algorithmic}[1]
\STATE \textbf{Input:} Prompt $x$, reward function $R$, iterations $T$
\STATE $y_0 \sim \pi_{\reference}(\cdot|x)$; \quad $r_0 \gets R(y_0)$
\FOR{$t = 1$ to $T$}
    \STATE $C_t \gets \{(y_{t-1}, r_{t-1})\}$ \COMMENT{Or maintain full history}
    \STATE $y_t \sim \pi_{ICL}(\cdot|x, C_t)$; \quad $r_t \gets R(y_t)$
\ENDFOR
\STATE \textbf{Return:} $y_T$
\end{algorithmic}
\end{algorithm}

\section{Context Format}
\label{app:prompt}

Following \citet{yang2023large}, score-annotated examples are presented in ascending score order:

\begin{tcolorbox}[boxrule=0.5pt, colback=gray!5, colframe=gray!50]
\small
\texttt{text:} \\
\texttt{[output text]} \\
\texttt{score:} \\
\texttt{61} \\[0.5em]
\texttt{text:} \\
\texttt{[output text]} \\
\texttt{score:} \\
\texttt{78} \\[0.5em]
\texttt{(...more examples sorted in ascending order...)} \\[0.5em]
\texttt{Generate a new output that achieves a higher score:}
\end{tcolorbox}

\section{Complete Proofs}

\subsection{Proof of Theorem~\ref{thm:linear_construction}: Score-Weighted Aggregation}
\label{app:proof_linear}

\begin{proof}
Let each token be $\bm{e}_i = (\bm{y}_i, r_i) \in \mathbb{R}^{D+1}$ where $\bm{y}_i \in \mathbb{R}^D$ is the output embedding and $r_i \in \mathbb{R}$ is the score.

\textbf{Step 1: Weight Matrix Construction.}

We construct the weight matrices as follows:
\begin{equation}
    \bm{W}^K = \bm{W}^Q = \begin{pmatrix} \bm{0}_{D \times D} & \bm{0}_{D \times 1} \\ \bm{0}_{1 \times D} & 1 \end{pmatrix} \in \mathbb{R}^{(D+1) \times (D+1)}
\end{equation}

\begin{equation}
    \bm{W}^V = \begin{pmatrix} \bm{I}_{D} & \bm{0}_{D \times 1} \\ \bm{0}_{1 \times D} & 0 \end{pmatrix} \in \mathbb{R}^{(D+1) \times (D+1)}
\end{equation}

\begin{equation}
    \bm{W}^P = \frac{\eta}{N} \bm{I}_{D+1}
\end{equation}

This construction is engineered to demonstrate the existence of a weight configuration that implements score-weighted aggregation.
We do not claim that pretrained transformers adopt these exact weight matrices; the empirical analysis in Appendix~\ref{app:weight_analysis} shows that pretrained models learn functionally similar mechanisms.

\textbf{Step 2: Key and Query Computation.}

For a context token $\bm{e}_i = (\bm{y}_i, r_i)$:
\begin{equation}
    \bm{k}_i = \bm{W}^K \bm{e}_i = \begin{pmatrix} \bm{0}_{D \times D} & \bm{0}_{D \times 1} \\ \bm{0}_{1 \times D} & 1 \end{pmatrix} \begin{pmatrix} \bm{y}_i \\ r_i \end{pmatrix} = \begin{pmatrix} \bm{0}_D \\ r_i \end{pmatrix}
\end{equation}

For the query token $\bm{e}_q = (\bm{y}_q, 1)$ (setting query score component to 1; the choice of any nonzero constant $c$ is without loss of generality, since $\bm{q}^\top \bm{k}_i = c \cdot r_i$ and the factor $c$ is absorbed into $\eta$):
\begin{equation}
    \bm{q} = \bm{W}^Q \bm{e}_q = \begin{pmatrix} \bm{0}_D \\ 1 \end{pmatrix}
\end{equation}

\textbf{Step 3: Attention Weight Computation.}

The attention weight (before normalization) for token $i$:
\begin{equation}
    \bm{q}^\top \bm{k}_i = \begin{pmatrix} \bm{0}_D \\ 1 \end{pmatrix}^\top \begin{pmatrix} \bm{0}_D \\ r_i \end{pmatrix} = r_i
\end{equation}

This shows that the attention weight equals the score $r_i$.

\textbf{Step 4: Value Computation.}

\begin{equation}
    \bm{v}_i = \bm{W}^V \bm{e}_i = \begin{pmatrix} \bm{I}_{D} & \bm{0}_{D \times 1} \\ \bm{0}_{1 \times D} & 0 \end{pmatrix} \begin{pmatrix} \bm{y}_i \\ r_i \end{pmatrix} = \begin{pmatrix} \bm{y}_i \\ 0 \end{pmatrix}
\end{equation}

\textbf{Step 5: Output Computation.}

The attention output for the query token:
\begin{align}
    \bm{e}_q' &= \bm{e}_q + \bm{W}^P \sum_{i=1}^{N} \bm{v}_i (\bm{k}_i^\top \bm{q}) \\
    &= \begin{pmatrix} \bm{y}_q \\ 1 \end{pmatrix} + \frac{\eta}{N} \sum_{i=1}^{N} \begin{pmatrix} \bm{y}_i \\ 0 \end{pmatrix} r_i \\
    &= \begin{pmatrix} \bm{y}_q + \frac{\eta}{N} \sum_{i=1}^{N} r_i \bm{y}_i \\ 1 \end{pmatrix}
\end{align}

The output embedding (first $D$ dimensions) is:
\begin{equation}
    h_{out} = \bm{y}_q + \frac{\eta}{N} \sum_{i=1}^{N} r_i \bm{y}_i
\end{equation}

This completes the proof.
\end{proof}

\subsection{Proof of Theorem~\ref{thm:softmax}: Softmax Attention as Boltzmann Aggregation}
\label{app:proof_softmax}

\begin{proof}
Using the same weight matrix construction as in Theorem~\ref{thm:linear_construction}:

\textbf{Step 1:} The attention logits are $\bm{q}^\top \bm{k}_i = r_i$ (as shown in the previous proof).

\textbf{Step 2:} With softmax attention and temperature $\tau$ (from the $\sqrt{d_k}$ normalization):
\begin{equation}
    \alpha_i = \frac{\exp(r_i / \tau)}{\sum_{j=1}^{N} \exp(r_j / \tau)}
\end{equation}

\textbf{Step 3:} The output becomes:
\begin{equation}
    h_{out} = h_q + \sum_{i=1}^{N} \alpha_i \bm{y}_i = h_q + \sum_{i=1}^{N} \frac{\exp(r_i / \tau)}{\sum_{j=1}^{N} \exp(r_j / \tau)} \bm{y}_i
\end{equation}

This is a Boltzmann-weighted (softmax) aggregation of the output embeddings, where higher-scored examples receive exponentially more weight.
\end{proof}

\subsection{Proof of Theorem~\ref{thm:policy_gradient}: Structural Policy-Gradient Correspondence}
\label{app:proof_pg}

\begin{proof}
\textbf{Step 1: Recall the policy gradient update.}

Let $\tilde r_i=r_i-\bar r$, so that $\sum_i\tilde r_i=0$.
The corresponding REINFORCE update in parameter space is:
\begin{equation}
    \theta \leftarrow \theta + \frac{\eta}{N} \sum_{i=1}^{N} \tilde r_i \nabla_\theta \log \pi_\theta(y_i|x).
\end{equation}

\textbf{Step 2: Interpret gradient in embedding space.}

For typical LLM output layers:
\begin{equation}
    P(y_i|h) \propto \exp(h^\top \bm{W}_{out} \text{embed}(y_i))
\end{equation}

The exact gradient of the log-probability in embedding space is:
\begin{align}
    \nabla_h \log P(y_i|h) &= \bm{W}_{out}\, \text{embed}(y_i) - \bm{W}_{out} \mathbb{E}_{y \sim P(\cdot|h)}[\text{embed}(y)] \\
    &= \bm{W}_{out}\left(\text{embed}(y_i) - \bar{\bm{y}}\right)
\end{align}
where $\bar{\bm{y}} = \mathbb{E}_{y \sim P(\cdot|h)}[\text{embed}(y)]$ is the expected embedding under the current policy.
This expectation is constant with respect to $i$.

\textbf{Step 3: Establish the correspondence.}

The REINFORCE estimator in hidden-state space is
\begin{align}
    \frac{\eta}{N}\sum_{i=1}^{N}\tilde r_i\nabla_h\log P(y_i|h)
    &= \frac{\eta}{N}\bm{W}_{out}
       \left(\sum_{i=1}^{N}\tilde r_i\bm{y}_i
       -\bar{\bm{y}}\sum_{i=1}^{N}\tilde r_i\right) \\
    &= \frac{\eta}{N}\bm{W}_{out}
       \sum_{i=1}^{N}\tilde r_i\bm{y}_i,
\end{align}
where the second equality uses $\sum_i\tilde r_i=0$.
If $\bm{W}_{out}=\bm{I}$ and the score coordinate in
Theorem~\ref{thm:linear_construction} contains $\tilde r_i$, its attention update is exactly
\begin{equation}
    h_{out}=h_q+\frac{\eta}{N}\sum_{i=1}^{N}\tilde r_i\bm{y}_i.
\end{equation}
For a general $\bm{W}_{out}$, the two updates differ by the output map; for raw,
uncentered scores, they additionally differ by the centering term.
Thus the general claim is structural correspondence, with exact equality only under the
conditions stated in Theorem~\ref{thm:policy_gradient}.
\end{proof}

\subsection{Proof of Theorem~\ref{thm:kl}: Bounded KL Shift}
\label{app:proof_kl}

\begin{proof}
\textbf{Step 1: Characterize the distribution shift.}

From Theorem~\ref{thm:linear_construction}, the hidden state shifts by:
\begin{equation}
    \Delta h = \frac{\eta}{N} \sum_{i=1}^{N} r_i \bm{y}_i
\end{equation}

\textbf{Step 2: Bound the shift magnitude.}

The magnitude of this shift is bounded:
\begin{equation}
    \|\Delta h\| \leq \frac{\eta}{N} \sum_{i=1}^{N} |r_i| \|\bm{y}_i\| \leq \eta \cdot r_{max} \cdot y_{max}
\end{equation}
where $r_{max} = \max_i |r_i|$ and $y_{max} = \max_i \|\bm{y}_i\|$.

\textbf{Step 3: Express the KL divergence exactly along the update path.}

For a softmax output layer $P(y|h) \propto \exp(h^\top \bm{W}_{out}\,\text{embed}(y))$,
define its log-partition function
\begin{equation}
    A(h)=\log\sum_y \exp\!\left(h^\top\bm{W}_{out}\,\text{embed}(y)\right).
\end{equation}
Its Hessian equals the Fisher information matrix:
\begin{equation}
    \nabla^2 A(h)=F(h).
\end{equation}
Let $g(t)=A(h+t\Delta h)$.
Using the exponential-family form of the softmax distribution,
\begin{align}
    D_{KL}(\pi_{ICL}\|\pi_{\reference})
    &=g'(1)-g(1)+g(0) \\
    &=\int_0^1 t\,g''(t)\,dt \\
    &=\int_0^1 t\,\Delta h^\top F(h+t\Delta h)\Delta h\,dt.
\end{align}
This identity is exact and does not discard a Taylor remainder.

\textbf{Step 4: Apply the pathwise curvature bound.}

Let
$\Lambda_C=\sup_{t\in[0,1]}\lambda_{\max}(F(h+t\Delta h))$.
Then
\begin{align}
    D_{KL}(\pi_{ICL}\|\pi_{\reference})
    &\leq \int_0^1 t\,\Lambda_C\|\Delta h\|^2\,dt \\
    &=\tfrac{1}{2}\Lambda_C\|\Delta h\|^2 \\
    &\leq \tfrac{1}{2}\Lambda_C
       \left(\eta\cdot r_{max}\cdot y_{max}\right)^2.
\end{align}
The bound follows from the bounded constructed update and the curvature of the
output distribution along its path.
It is analogous to a single trust-region step, but does not assert that fixed
parameters alone implement an explicit KL penalty.
\end{proof}

\subsection{Proof of Theorem~\ref{thm:main}: Policy-Gradient Analogy with a Bounded KL Shift}
\label{app:proof_main}

\begin{proof}
We combine the previous results to establish the main theorem.

\textbf{Step 1: Reward-Weighted Update Component.}

From Theorem~\ref{thm:linear_construction}, linear attention produces:
\begin{equation}
    h_{out} = h_q + \frac{\eta}{N} \sum_{i=1}^{N} r_i \bm{y}_i
\end{equation}

From Theorem~\ref{thm:policy_gradient}, applying the same construction to centered
scores $\tilde r_i=r_i-\bar r$ exactly matches the hidden-state REINFORCE estimator
when $\bm{W}_{out}=\bm{I}$, and gives a structural correspondence more generally:
\begin{equation}
    \frac{\eta}{N}\sum_{i=1}^{N}\tilde r_i\nabla_h\log P(y_i|h)
    =\frac{\eta}{N}\bm{W}_{out}\sum_{i=1}^{N}\tilde r_i\bm{y}_i.
\end{equation}

Under Assumption~\ref{assum:mono}, movement toward a particular output embedding
increases that output's probability.
This assumption provides the behavioral interpretation of the score-weighted shift;
it is not a general monotonic-improvement guarantee.

\textbf{Step 2: Bounded KL-Shift Component.}

From Theorem~\ref{thm:kl}, the simplified single-step distribution shift satisfies:
\begin{equation}
    D_{KL}(\pi_{ICL} \| \pi_{\reference})
    \leq \tfrac{1}{2}\Lambda_C
    (\eta\cdot r_{max}\cdot y_{max})^2.
\end{equation}

\textbf{Conclusion.}

Score-conditioned ICL therefore has two properties that parallel a trust-region policy update:
\begin{compactenum}
    \item reward-weighted aggregation with a conditional hidden-state REINFORCE correspondence; and
    \item an exact pathwise KL bound for the constructed bounded update.
\end{compactenum}

These properties establish the analogy in Theorem~\ref{thm:main}; they do not imply
that ICL optimizes the KL-regularized objective in Eq.~\ref{eq:kl_regularized}.
\end{proof}

\section{Learning Rate Sensitivity of REINFORCE Comparison}
\label{app:lr_sensitivity}

To assess the robustness of the directional alignment result in Section~\ref{subsec:reinforce_comparison}, we repeat the cosine similarity analysis across three learning rates $\eta \in \{10^{-6}, 10^{-5}, 10^{-4}\}$ and two sample sizes $N \in \{4, 8\}$.
Results (averaged over tasks and models) are shown in Table~\ref{tab:lr_sensitivity}.

\begin{table}[h]
\centering
\small
\begin{tabular}{lcc}
\toprule
Learning Rate & $N=4$ & $N=8$ \\
\midrule
$10^{-6}$ & 0.41$^\dagger$ & 0.44$^\dagger$ \\
$10^{-5}$ & 0.51$^\dagger$ & 0.55$^\dagger$ \\
$10^{-4}$ & 0.49$^\dagger$ & 0.53$^\dagger$ \\
\bottomrule
\end{tabular}
\caption{Cosine similarity between ICL and REINFORCE $\Delta\log p$ vectors across learning rates and sample sizes. All values are statistically significant ($p < 0.01$), confirming that directional alignment is robust to these hyperparameter choices.}
\label{tab:lr_sensitivity}
\end{table}

\section{Empirical Verification of Weight Matrix Structure}
\label{app:weight_analysis}

A potential criticism of Theorem~\ref{thm:linear_construction} is that it provides an \emph{existence proof}, demonstrating that attention \emph{can} implement score-weighted aggregation, rather than evidence that pretrained LLMs \emph{actually} implement this mechanism.
A related concern is that our theoretical analysis uses linear self-attention and a single layer, whereas real LLMs employ softmax attention, nonlinear activations (e.g., GeLU), and many layers.
Linear attention is a standard simplification used throughout the ICL theory literature~\citep{vonoswald2023transformers,akyurek2023learning}; Theorem~\ref{thm:softmax} extends the existence result to softmax attention.
The experiments below address the remaining gap: we show that the qualitative predictions of our theory, including score-proportional attention weights, score-specialized heads, and causal score processing, hold robustly in fully nonlinear, multi-layer pretrained models, despite the gap between the theoretical abstraction and the actual architecture.
In this appendix, we present four complementary experiments that examine whether the internal computations of LLMs align with our theoretical construction.

\subsection{Score-Specialized Attention Head Identification}
\label{app:head_identification}

Our theoretical construction in Theorem~\ref{thm:linear_construction} predicts that effective score-weighted aggregation requires attention heads whose key-query computation extracts the score component.
If LLMs implement this mechanism, we should find attention heads that are ``specialized'' for score extraction, i.e., heads where the key-query dot product correlates strongly with the score value.

\paragraph{Method.}
For each layer $l$ and head $h$, we analyze the relationship between key-query products and scores.
Given a score-annotated context $C = \{(\bm{y}_i, r_i)\}_{i=1}^N$, let $\bm{k}_i^{(l,h)}$ denote the key vector for the token corresponding to example $i$'s score, and $\bm{q}^{(l,h)}$ denote the query vector at the generation position.
We compute the \textbf{Score Extraction Index (SEI)}:
\begin{equation}
    \text{SEI}^{(l,h)} = \rho\left( \{(\bm{q}^{(l,h)})^\top \bm{k}_i^{(l,h)}\}_{i=1}^N, \{r_i\}_{i=1}^N \right)
\end{equation}
where $\rho$ denotes Spearman correlation.
A high SEI indicates that the head's key-query computation effectively extracts score information, as predicted by our theoretical construction where $\bm{q}^\top \bm{k}_i = r_i$.

\paragraph{Results.}
\autoref{tab:sei} reports the distribution of SEI values across all heads for each model.
Critically, all models contain a subset of heads with high SEI values ($> 0.6$), indicating genuine score-specialized computation.
These high-SEI heads are predominantly located in middle-to-late layers, consistent with the layer-wise patterns observed in Section~\ref{subsec:Attention Weight Analysis}.

\begin{table}[t!]
\centering
\small
\begin{tabular}{lcccc}
\toprule
Model & Mean SEI & Max SEI & \#Heads $> 0.6$ & Peak Layer \\
\midrule
Llama3-8B  & 0.18 & 0.72 & 12/256 & 24--28 \\
Llama3-70B & 0.21 & 0.79 & 31/640 & 62--72 \\
Olmo3-7B   & 0.16 & 0.68 & 9/256 & 21--26 \\
Olmo3-32B  & 0.19 & 0.74 & 22/512 & 48--56 \\
Qwen3-4B   & 0.15 & 0.65 & 7/144 & 27--32 \\
\bottomrule
\end{tabular}
\caption{Score Extraction Index (SEI) statistics. All models contain attention heads with high SEI, indicating score-specialized computation consistent with Theorem~\ref{thm:linear_construction}.}
\label{tab:sei}
\end{table}

Importantly, we verify that these high-SEI heads are not artifacts of position or formatting.
When we shuffle scores while keeping examples fixed (as in Section 4.2), the SEI of these heads drops to near zero (mean: 0.03), confirming that they genuinely track score values rather than positional patterns.

\subsection{Functional Decomposition of Attention Weights}
\label{app:functional_decomposition}

While the SEI analysis identifies heads that extract score information, it does not directly verify that the full attention mechanism implements our theoretical construction.
Here, we analyze whether the \emph{effective} computation performed by attention heads can be decomposed into components matching Theorem~\ref{thm:linear_construction}.

\paragraph{Method.}
For each attention head, we model the attention weight assigned to example $i$ as:
\begin{equation}
    \alpha_i = \beta_0 + \beta_r \cdot r_i + \beta_p \cdot p_i + \beta_s \cdot s_i + \epsilon
\end{equation}
where $r_i$ is the score, $p_i$ is the positional index, and $s_i$ is a semantic similarity measure (cosine similarity between the example embedding and the query).
Our theory predicts that for score-specialized heads, $\beta_r$ should dominate while $\beta_p$ and $\beta_s$ should be small.

We fit this regression model across all test prompts and compute the \textbf{Score Dominance Ratio (SDR)}:
\begin{equation}
    \text{SDR}^{(l,h)} = \frac{|\beta_r|}{\max(|\beta_r|, |\beta_p|, |\beta_s|)}
\end{equation}

\paragraph{Results.}
\autoref{tab:sdr} shows that heads with high SEI also exhibit high SDR, meaning their attention weights are primarily determined by score values rather than position or semantic similarity.
This provides evidence that these heads implement a computation functionally similar to our theoretical construction, where attention weights directly reflect scores.

\begin{table}[t!]
\centering
\small
\begin{tabular}{lccc}
\toprule
Model & SDR (High-SEI Heads) & SDR (All Heads) & $R^2$ \\
\midrule
Llama3-8B  & 0.84 & 0.31 & 0.71 \\
Llama3-70B & 0.89 & 0.35 & 0.78 \\
Olmo3-7B   & 0.81 & 0.29 & 0.68 \\
Olmo3-32B  & 0.86 & 0.33 & 0.74 \\
Qwen3-4B   & 0.79 & 0.27 & 0.65 \\
\bottomrule
\end{tabular}
\caption{Score Dominance Ratio (SDR) for high-SEI heads vs. all heads. High-SEI heads show score-dominated attention computation, supporting our theoretical construction.}
\label{tab:sdr}
\end{table}

\subsection{Linear Probing for Score Information in Hidden States}
\label{app:probing}

If LLMs implement score-weighted aggregation, score information must be encoded in the hidden states in a way that downstream layers can access.
We use linear probing to verify that score values are linearly decodable from hidden states.

\paragraph{Method.}
For each layer $l$, we extract the hidden state $\bm{h}_i^{(l)}$ at the position of each score token.
We train a linear probe $\hat{r}_i = \bm{w}^\top \bm{h}_i^{(l)} + b$ to predict the score value from the hidden state, using 80\% of examples for training and 20\% for testing.
We report the $R^2$ coefficient on the test set.

\paragraph{Results.}
\autoref{tab:probing} shows the probing $R^2$ across layer groups for all models.
All models exhibit a consistent pattern: score information becomes increasingly linearly decodable in later layers.
Early layers (0--33\% depth) show low $R^2$, indicating score information is not yet explicitly represented.
Middle layers (33--66\% depth) show moderate $R^2$, suggesting score processing is emerging.
Late layers (66--100\% depth) achieve peak $R^2 > 0.85$, with the maximum occurring near the layers containing high-SEI heads identified in Section~\ref{app:head_identification}.
This confirms that score information is explicitly represented in hidden states and accessible for downstream computation.

\begin{table}[t!]
\centering
\small
\begin{tabular}{lccccc}
\toprule
Model & Early & Middle & Late & Peak $R^2$ & Peak Layer \\
\midrule
Llama3-8B  & 0.12 & 0.48 & 0.79 & 0.87 & 26/32 \\
Llama3-70B & 0.15 & 0.54 & 0.84 & 0.92 & 70/80 \\
Olmo3-7B   & 0.11 & 0.45 & 0.76 & 0.85 & 24/32 \\
Olmo3-32B  & 0.14 & 0.51 & 0.81 & 0.89 & 54/64 \\
Qwen3-4B   & 0.10 & 0.42 & 0.74 & 0.86 & 30/36 \\
\bottomrule
\end{tabular}
\caption{Linear probing $R^2$ for score prediction across layer groups. Early: 0--33\% depth; Middle: 33--66\%; Late: 66--100\%. Score information becomes increasingly decodable in later layers, peaking near the high-SEI head locations from \autoref{tab:sei}.}
\label{tab:probing}
\end{table}

\subsection{Causal Intervention: Amplifying Score-Specialized Heads}
\label{app:causal_intervention}

The analyses above are correlational.
To establish a causal link between score-specialized heads and ICL behavior, we conduct an intervention experiment.

\paragraph{Method.}
We selectively amplify or suppress the contribution of high-SEI heads by scaling their output:
\begin{equation}
    \text{output}^{(l,h)} \leftarrow \gamma \cdot \text{output}^{(l,h)}
\end{equation}
where $\gamma > 1$ amplifies and $\gamma < 1$ suppresses.
If these heads causally implement score-weighted aggregation, amplification should strengthen the correlation between scores and probability changes, while suppression should weaken it.

\paragraph{Results.}
\autoref{tab:intervention} shows the effect of intervening on high-SEI heads.
Amplifying high-SEI heads ($\gamma = 1.5$) increases the score-probability correlation by 15--23\%, while suppressing them ($\gamma = 0.5$) decreases it by 31--42\%.
Critically, the same interventions on randomly selected heads (matched for layer distribution) show minimal effect ($<5\%$ change), confirming that the identified heads are specifically responsible for score-weighted computation.

\begin{table}[t!]
\centering
\small
\begin{tabular}{lcccc}
\toprule
& \multicolumn{2}{c}{High-SEI Heads} & \multicolumn{2}{c}{Random Heads} \\
Model & $\gamma=1.5$ & $\gamma=0.5$ & $\gamma=1.5$ & $\gamma=0.5$ \\
\midrule
Llama3-8B  & +18\% & $-$35\% & +2\% & $-$3\% \\
Llama3-70B & +23\% & $-$42\% & +4\% & $-$2\% \\
Olmo3-7B   & +15\% & $-$31\% & +1\% & $-$4\% \\
Olmo3-32B  & +21\% & $-$38\% & +3\% & $-$1\% \\
Qwen3-4B   & +16\% & $-$33\% & +2\% & $-$3\% \\
\bottomrule
\end{tabular}
\caption{Change in score-probability correlation $\rho(r, \Delta\log p)$ under causal intervention. Amplifying high-SEI heads strengthens score-weighted behavior; suppressing them weakens it. Random heads show minimal effect.}
\label{tab:intervention}
\end{table}

\subsection{Discussion: From Existence Proof to Empirical Mechanism}

The experiments in this appendix collectively address the gap between our theoretical existence proof (Theorem~\ref{thm:linear_construction}) and empirical reality:

\begin{enumerate}
    \item \textbf{Score-Specialized Heads Exist} (Section~\ref{app:head_identification}): We identify specific attention heads whose key-query computation strongly correlates with score values, consistent with the weight matrix structure in our theorem.

    \item \textbf{Score Information Dominates Attention} (Section~\ref{app:functional_decomposition}): In these specialized heads, attention weights are primarily determined by scores rather than position or semantic similarity, matching our theoretical prediction.

    \item \textbf{Score Information is Linearly Encoded} (Section~\ref{app:probing}): Score values are explicitly represented in hidden states and become increasingly accessible in later layers.

    \item \textbf{Causal Role Confirmed} (Section~\ref{app:causal_intervention}): Intervening on score-specialized heads directly modulates the strength of score-weighted ICL behavior.
\end{enumerate}

While LLMs do not literally implement our exact weight matrix construction (which would require specific zero patterns), they have learned \emph{functionally similar} mechanisms through pretraining.
The existence of score-specialized heads with high SEI and SDR indicates that the computational structure predicted by Theorem~\ref{thm:linear_construction}, where attention weights track reward values, emerges naturally in pretrained models.
This bridges the gap between theoretical possibility and practical implementation.

\section{Effect of Score Representation Format}
\label{app:score_format}

Our theoretical framework assumes that attention mechanisms can extract and use scalar score information from context.
However, in practice, scores can be represented in various surface formats.
We investigate whether the implicit policy gradient mechanism is sensitive to representational differences, even when the underlying information is identical.
We conduct this analysis on the DROP task, which uses F1 scores ranging continuously from 0 to 1.

We compare five score representation formats that all preserve the same underlying continuous information and differ only in surface form: \textbf{Decimal} presents the raw F1 score (e.g., ``Score: 0.82'') as a standard numerical format and serves as the ideal case for our theory; \textbf{Percentage} renders the same value as a percent (e.g., ``Score: 82\%'') as commonly used in human-facing settings; \textbf{Fraction} expresses it as a ratio (e.g., ``Score: 82/100'') to test whether making the denominator explicit changes processing; \textbf{Points} uses an integer plus a unit label (e.g., ``Score: 82 points'') to examine the effect of unit annotation; and \textbf{Natural language} verbalizes the value (e.g., ``scored eighty-two percent'') to test whether numeric tokens themselves are necessary. We also include a \textbf{No score} control condition in which examples are shown without any score annotation, which is essential for determining whether models actually leverage explicit score information or instead improve simply by learning from the content of high-quality examples.

We evaluate each format using the same metrics as in the main experiments: the correlation between scores and probability changes ($\rho(r, \Delta\log p)$; see RQ1), the correlation between scores and attention weights ($\rho(r, \alpha)$; see Section~4.3), and iterative improvement (Iter. Gain; see Section~4.4).

\begin{table}[t]
\centering
\small
\begin{tabular}{lccc}
\toprule
Representation & $\rho(r, \Delta\log p)$ & $\rho(r, \alpha)$ & Iter. Gain \\
\midrule
Decimal & \textbf{0.63}$^\dagger$ & \textbf{0.45}$^\dagger$ & \textbf{+0.19} \\
Percentage & 0.61$^\dagger$ & 0.44$^\dagger$ & +0.18 \\
Fraction & 0.59$^\dagger$ & 0.42$^\dagger$ & +0.17 \\
Points & 0.58$^\dagger$ & 0.41$^\dagger$ & +0.17 \\
Natural language & 0.52$^\dagger$ & 0.36$^\dagger$ & +0.14 \\
\midrule
No score & 0.06 & 0.04 & +0.02 \\
\bottomrule
\end{tabular}
\caption{Effect of score representation format on ICL behavior for the DROP task, averaged across all models. $\rho(r, \Delta\log p)$: Spearman correlation between F1 scores and log-probability changes; $\rho(r, \alpha)$: correlation between F1 scores and attention weights; Iter. Gain: average F1 improvement after 10 iterations. $^\dagger$ denotes statistically significant correlation ($p < 0.01$).}
\label{tab:score_format}
\end{table}

Table~\ref{tab:score_format} shows that all numeric formats (Decimal, Percentage, Fraction, Points) achieve similarly high correlations, suggesting that LLMs can robustly extract numerical information regardless of superficial formatting choices.
The natural language format shows a notable drop, indicating that mapping words like ``eighty-two'' to numerical values introduces noise.
Crucially, all scored representations dramatically outperform the no-score control ($p < 0.001$), confirming that models rely on explicit score information rather than inferring quality from example content alone.

These results have practical implications: practitioners can flexibly choose among numeric formats (decimal, percentage, fraction, points) without significant loss of effectiveness.
However, if scores must be expressed in natural language, some degradation in the implicit policy gradient effect should be expected.

\end{document}